\documentclass[twoside]{article}

%
\usepackage[accepted]{aistats2020}
%


\setlength{\pdfpageheight}{11in}
\setlength{\pdfpagewidth}{8.5in}

\usepackage[square]{natbib}

\usepackage{hyperref}       
\usepackage{url}            
\usepackage{booktabs}       
\usepackage{amsfonts}       
\usepackage{nicefrac}       
\usepackage{microtype}      
\usepackage{natbib}
\usepackage{soul,color}

\usepackage[symbol]{footmisc}


\usepackage{bm}             
\usepackage{graphicx}       
\usepackage[noend]{algpseudocode} 
\usepackage{caption}        
\usepackage{array}          
\usepackage{booktabs}       
\usepackage{pbox}           
\usepackage{subcaption}     
\usepackage{floatrow}
\newfloatcommand{capbtabbox}{table}[][\FBwidth]

\usepackage{lmodern}
\usepackage[scale=2]{ccicons}
\usepackage{color}
\usepackage[linewidth=1pt]{mdframed}
\usepackage{cases}
\usepackage{amsfonts,amsmath,amssymb,amsthm}
\DeclareMathOperator*{\argmax}{arg\,max}
\DeclareMathOperator*{\argmin}{arg\,min}

\theoremstyle{plain}
\newtheorem{thm}{Theorem}
\newtheorem{lem}{Lemma}
\newtheorem{prop}{Proposition}

\theoremstyle{definition}
\newtheorem{defn}{Definition}

\theoremstyle{remark}

\usepackage{xcolor}
\usepackage[linesnumbered,ruled]{algorithm2e}

\SetCommentSty{mycommfont}
\SetKwInput{KwInput}{Input}                
\SetKwInput{KwOutput}{Output}              
\SetKwInput{KwInitialization}{Initialization}

\usepackage{colortbl}
\usepackage{color}

\begin{document}

%

%
\runningauthor{Thanh Tang Nguyen  \hspace{15 pt} Sunil Gupta \hspace{15 pt} Huong Ha \hspace{15 pt} Santu Rana \hspace{15 pt} Svetha Venkatesh}
\addtocounter{footnote}{-1}

\twocolumn[

\aistatstitle{Distributionally Robust Bayesian Quadrature Optimization}

\aistatsauthor{Thanh Tang Nguyen $^\dagger$  \hspace{15 pt} Sunil Gupta \hspace{15 pt} Huong Ha \hspace{15 pt} Santu Rana \hspace{15 pt} Svetha Venkatesh}

\aistatsaddress{Applied Artificial Intelligence Institute (A$^2$I$^2$) \\ Deakin University, Geelong, Australia} ]

\footnote{$^\dagger$ Corresponding: thanhnt@deakin.edu.au}


\begin{abstract}
Bayesian quadrature optimization (BQO) maximizes the expectation of an expensive black-box integrand taken over a known probability distribution. In this work, we study BQO under distributional uncertainty in which the underlying probability distribution is unknown except for a limited set of its i.i.d. samples. A standard BQO approach maximizes the Monte Carlo estimate of the true expected objective given the fixed sample set. Though Monte Carlo estimate is unbiased, it has high variance given a small set of samples; thus can result in a spurious objective function. We adopt the distributionally robust optimization perspective to this problem by maximizing the expected objective under the most adversarial distribution. In particular, we propose a novel posterior sampling based algorithm, namely distributionally robust BQO (DRBQO) for this purpose. We demonstrate the empirical effectiveness of our proposed framework in synthetic and real-world problems, and characterize its theoretical convergence via Bayesian regret. 

\end{abstract}

\section{Introduction}
\label{section:introduction}
Making robust decisions in the face of parameter uncertainty is critical to many real-world decision problems in machine learning, engineering and economics. {Besides the uncertainty that is inherent in data, a further difficulty arises due to the uncertainty in the context. A common example is hyperparameter selection of machine learning algorithms where cross-validation is performed using a small to medium sized validation set. Due to limited size of validation set, the variance across different folds might be high. 
Ignoring this uncertainty results in sub-optimal and non-robust decisions. {This problem in practice can be further exacerbated as the outcome measurements may be noisy and the black-box function itself is expensive to evaluate.} Being risk-averse is critical in such settings. 

{One way to capture the uncertainty in context is through a probability distribution. In this work, we consider stochastic black-box optimization that is distributionally robust to the uncertainty in context. We formulate the problem as}
\begin{align}
    \label{eq:stochastic_opt}
    \max_{x \in \mathcal{X} \subset \mathbb{R}^d} g(x) := \max_{x \in \mathcal{X}} \mathbb{E}_{
    P_0(w)} [f(x, w)],
\end{align}
where $f$ is an expensive black-box function and $P_0$ is a distribution over context $w$. We assume distributional uncertainty in which the distribution $P_0$ is known only through a {limited} set of its i.i.d samples  $S_n = \{w_1, ..., w_n\}$. This is equivalent to the scenario in which we are able to evaluate $f$ only on $\mathcal{X} \times S_n$ during optimization.



In the case that $P_0$ is known (e.g., $P_0$ is either available in an analytical form or easy to evaluate), a standard solution to the problem in Equation (\ref{eq:stochastic_opt}) is based on Bayesian quadrature \citep{o1991bayes,nipsRasmussenG02,oates2016probabilistic,OatesS19}. The main idea in this approach is that we {can} build a Gaussian Process (GP) model of $f$ and use the known relationship in the integral to imply a second GP model of $g$. This is possible because integration is a linear operator. 

Given the distributional uncertainty in which $P_0$ is only known through a limited set of its samples, a naive approach to the problem in Equation (\ref{eq:stochastic_opt}) is to maximize its Monte Carlo estimate:
\begin{align}
    \label{eq:mc}
    g_{mc}(x) :=  \mathbb{E}_{\hat{P}_n(w)}[f(x,w)],
\end{align}
where $\hat{P}_n(w) = \frac{1}{n} \sum_{i=1}^n \delta(w - w_i)$ and $\delta(.)$ is the Dirac distribution. When $n$ is sufficiently large, $g_{mc}(x)$ approximates $g(x)$ reasonably well as guaranteed by the weak law of large numbers; thus, the optimal solution of $g_{mc}(x)$ represents that of $g(x)$. In contrast, when $n$ is small, the optimal solution of $g_{mc}(x)$ might be sub-optimal to $g(x)$. {Since we are considering distributional perturbation, we cannot} guarantee the Monte Carlo estimate $g_{mc}(x)$ to be a good surrogate objective. 

A more {conservative} approach from statistical learning is to maximize the variance-regularized objective:
\begin{align}
    \label{eq:variance_reg}
    g_{bv}(x) := \mathbb{E}_{\hat{P}_n}[f(x, w)] - C_1 \sqrt{ Var_{\hat{P}_n}[ f(x, w) ] /n },
\end{align}
where $Var_{\hat{P}_n}$ denotes the empirical variance and $C_1$ is a constant determining the trade-off between bias and variance. {Thus}, given the context of {limited} samples, it is logical to use $g_{bv}(x)$ instead of $g_{mc}(x)$ as a surrogate objective for maximizing $g(x)$. 
However, unlike $g_{mc}(x)$, the variance term in $g_{bv}(x)$ breaks the linear relationship with respect to $f$. As a result, {though} $f$ is a GP, $g_{bv}(x)$ {need} not be \citep{o1991bayes}. 

\begin{figure}[t]
    \centering
    \includegraphics[scale=0.55]{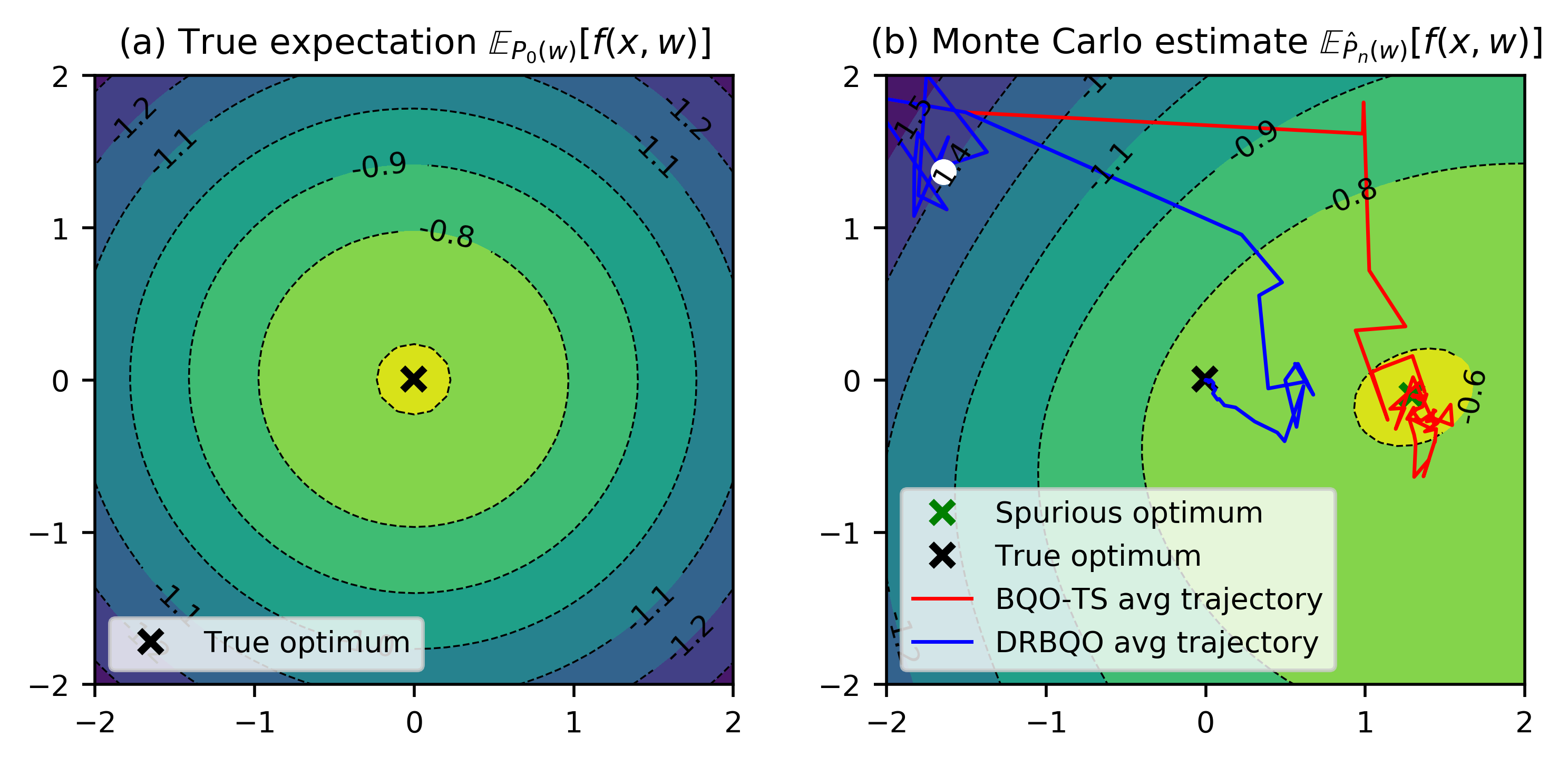}
    \caption{(a) The expected logistic function $g(x) = \mathbb{E}_{\mathcal{N}(w; 0, I)}[-\log(1 + e^{x^T w})]$ and (b) its Monte Carlo estimate using $10$ samples of $w$, and the averaged trajectories of our proposed algorithm DRBQO (detailed in Section \ref{section:alogirthm}) and a standard Bayesian Quadrature Optimization (BQO) baseline. Though being unbiased, Monte Carlo estimates can suffer from high variance given limited samples, resulting in spurious function estimates. Our proposed algorithm DRBQO approaches this mismatch problem by {finding} the distributionally robust solution under the most adversarial distribution over a $\chi^2$ distributional ball.}
    \label{fig:figure1}
\end{figure}
{Alternatively, we approach the distributional uncertainty problem above by formulating the distributionally robust Bayesian quadrature optimization}. In the face of the uncertainty about $P_0$, we seek to find a distributionally robust solution under the most adversarial distribution. Our approach is based on solving a surrogate distributionally robust optimization problem generated by posterior sampling at each time step. The surrogate optimization is solved efficiently via bisection search through any optimization. We demonstrate the efficiency of our algorithm in both synthetic and real-world problems. Our contributions are:
\begin{itemize}
    
    \item Demonstrating the limitations of standard Bayesian quadrature optimization algorithms under distributional uncertainty (Section \ref{section:setup}), and introducing a new algorithm, namely DRBQO, that overcomes these limitations (Section \ref{section:alogirthm}); 
    \item  Introducing the concept of $\rho$-regret for measuring algorithmic performance in this formulation  (Section \ref{section:setup}), and characterizing the theoretical convergence of our proposed algorithm in sublinear Bayesian regret (Section \ref{section:theory}); 
    \item Demonstrating the efficiency of DRBQO in finding distributionally robust solutions in both synthetic and real-world problems (Section \ref{section:experiment}).
\end{itemize}

\section{Related Work}
\label{section:related}
Our work falls {in} the {area} of Bayesian quadrature optimization whose goal is to perform black-box global optimization of an expected objective {of the form} $\int f(x,w) P(w) dw$. This type of problems is known with various names such as optimization of integrated response functions \citep{Williams:2000:SDC:932015}, multi-task Bayesian optimization \citep{DBLP:conf/nips/SwerskySA13}, and optimization with expensive integrands \citep{Toscano_IntegralBO_18}. This direction approaches the problem by evaluating $f(x,w)$ at one or {several} values of $w$ given $x$. This ameliorates the need of evaluating $f(x,w)$ at all the values of $w$ and can outperform  methods that evaluate the full objective via numerical quadrature \citep{FrazierBOtut18,Toscano_IntegralBO_18}. {All the previous approaches assume the knowledge of the distribution in the expected function. The distinction of our formulation is that we are interested in the distributional uncertainty scenario in which the underlying distribution is unknown except its empirical estimate.} 


Our work also {shares}  similarity with the distributionally robust optimization (DRO) literature \citep{DRO_review}. This problem setup considers the parameter uncertainty in real-world decision making problems. The uncertainty may be due to limited data and noisy measurements. DRO takes into account this uncertainty and approaches the problem by taking the worst-case of the underlying distribution within an uncertainty set of distributions. DRO variants distinguish each other in design choices of the {distributional uncertainty set} and in problem contexts. Regarding the design of uncertainty sets, common designs specify the set of distributions with respect to the nominal distribution via  distributional discrepancy such as $\chi^2$ divergence \citep{NamkoongD16}, Wasserstein distance \citep{WasserstainDRO19}, and Maximum Mean Discrepancy \citep{MMD_DRO19}.  Regarding studying DRO in different problem contexts, the following contexts have been investigated: robust optimization \citep{DBLP:journals/mansci/Ben-TalHWMR13}, robust risk minimization \citep{NamkoongD16}, sub-modular maximization \citep{DBLP:conf/aistats/StaibWJ19}, boosting algorithms \citep{DROBoosting}, graphical models \citep{DBLP:conf/nips/FathonyRBZZ18}, games \citep{sun2018distributional,DBLP:conf/aistats/Zhu0WGY19}, fairness in machine learning \citep{DBLP:conf/icml/HashimotoSNL18}, Markov Decision Process \citep{nipsXuM10}, and reinforcement learning \citep{DRRL}. The distinction of our work is in terms of the problem context where we study DRO in Bayesian quadrature optimization.

\section{Problem Setup} 

\label{section:setup}
\textbf{Model}. Let $f: \mathcal{X} \times \Omega \rightarrow \mathbb{R}$ be an element of a reproducing kernel Hilbert space (RKHS) $\mathcal{H}_k$ where $k$: $\mathcal{X} \times \Omega \times \mathcal{X} \times \Omega \rightarrow \mathbb{R}$ is a positive-definite kernel, and $\mathcal{X}$ and $\Omega$ are, unless explicitly mentioned otherwise, compact domains in $\mathbb{R}^d$ and $\mathbb{R}^m$ for some {dimensions} $d$ and $m$, respectively. We further assume that $k$ is continuous and bounded from above by $1$, and that $\| f \|_k = \sqrt{\langle f, f \rangle_k } \leq B$ for some $B > 0$.  Two commonly used kernels are Squared Exponential (SE) and Mat\'ern \citep{RasmussenW06} which are similarly defined on $\mathcal{X} \times \Omega$ as follows: 
\begin{align*}
    &k_{SE}(.,.; ., .) = \exp ( - d^2_{\theta, \psi} (.,.; ., .)), \\
    &k_{\nu}(.,.; ., .) = \frac{2^{1-\nu}}{\Gamma(\nu)} \sqrt{2\nu} d_{\theta, \psi}(.,.; ., .) J_{\nu}(\sqrt{2\nu} d_{\theta, \psi}(.,.; ., .)), 
\end{align*}
where $\theta$ and $\psi$ are the length scales, $\nu >0$ defines the smoothness in the Mat\'ern kernel, $J(\nu)$ and $\Gamma(\nu)$ define the Bessel function and the gamma function, respectively, and $d^2_{\theta, \psi}(x, w; x', w') = \sum_{i=1}^{d} (x_i - x'_i)^2 / \theta^2_i +  \sum_{j=1}^{m}  (w_j - w'_j)^2 / \psi^2_j$. 

Let $P_0$ be a distribution on $\Omega$, and $S_n = \{w_1, ..., w_n\}$ be a fixed set of samples drawn from $P_0$. Though $f$ is defined on $\mathcal{X} \times \Omega$, we are interested in the distributional uncertainty scenario in which we can query $f$ only on $\mathcal{X} \times S_n$ during optimization. At time $t$, we query $f$ at $(x_t, w_t) \in \mathcal{X} \times S_n$ and observe a noisy reward $y_t = f(x_t, w_t) + \epsilon_t$, where $\epsilon_t \sim \mathcal{N}(0,\sigma^2)$. Our goal is to find a robust solution point $x \in \mathcal{X}$ such that $ \mathbb{E}_{P(w)}[f(x,w)]$ remains high even under the most adversarial realization of the unknown distribution $P_0$.  

Given a sequence of noisy observations $(x_i, w_i, y_i)_{i=1}^t$, the posterior distribution under a GP(0, $k(.,.,;.,.)$) prior is also also a GP with the following posterior mean and covariance: 
\begin{align*}
    \mu_t(x,w) &= k_t(x,w)^T(K_t + \sigma^2 I)^{-1} y_{1:t},  \\ 
    C_t(x,w; x', w') &= k(x,w; x', w') \\
    & - k_t(x,w)^T (K_t + \sigma^2 I)^{-1} k_t(x', w'), 
\end{align*}
where  $y_{1:t}=(y_1, ..., y_t)$, $k_t(x,w) = [k(x_i, w_i; x, w)]_{i=1}^t$, and $K_t = [k(x_i, w_i; x_j, w_j)]_{1 \leq i,j \leq t}$ is the kernel matrix.

We define the quadrature functional as
\begin{align}
    \label{eq:g_functional}
    g(f, x, P) := \int P(w|x) f(x,w) dw,  
\end{align}
for any conditional distribution $P(.|x)$ on $\Omega$ for all $x \in \mathcal{X}$, i.e., $P \in \mathcal{P}_{n,\rho} \times \mathcal{X}$. 
As an extended result of Bayesian quadrature \citep{o1991bayes}, for any conditional distribution $P \in \mathcal{P}_{n,\rho} \times \mathcal{X}$, $g(f,x,P)$ also follows a GP with the following mean and variance: 
\begin{align}
    \label{eq:quadrature_mu}
    &\mu_t(x, P) := \mathbb{E}_t[g(f,x, P)] = \int P(w|x) \mu_t(x,w) dw \\
    &\sigma^2_t(x, P) := Var_t[g(f, x, P)] \nonumber \\
    \label{eq:quadrature_sigma}
    &= \int \int P(w|x) P(w'|x) C_t(x,w; x, w') dw dw'. 
\end{align}
\textbf{Optimization goal}. We seek to optimize the expected function under the most adversarial distribution over some distributional uncertainty set $\mathcal{P}_{n,\rho} := \{ P | D(P, \hat{P}_n) \leq \rho \}$ : 
\begin{align}
    \label{eq:drbqo}
    \max_{x \in \mathcal{X}} \min_{P \in \mathcal{P}_{n,\rho}} \mathbb{E}_{P(w)} [f(x,w)], 
\end{align}
where $\hat{P}_n(w) = \frac{1}{n} \sum_{i=1}^n \delta(w-w_i)$ is the empirical distribution, $\rho \geq 0$ is the confidence radius around the empirical distribution with respect to a distribution divergence $D(.,.)$ such as Wasserstein distance, maximum mean discrepancy, and $\phi$-divergence.  We can interpret $\mathcal{P}_{n,\rho}$ as the set of {perturbed distributions with respect to} the empirical distribution $\hat{P}_n$ within a confidence radius $\rho$. We then seek a robust solution in the face of {adversarial distributional perturbation} within $\mathcal{P}_{n,\rho}$. 

{For any} distribution divergence choice $D$, we define a $\rho$-robust point to be any $x^*_{\rho}$ such that 
\begin{align}
    x^*_{\rho} \in \argmax_{x \in \mathcal{X}} \min_{P \in \mathcal{P}_{n,\rho}}  \mathbb{E}_{P(w)}[f(x,w)]. 
\end{align}
Our goal is to report after time $t$ a distributionally robust point $x_t$ in the sense that it has {small} $\rho$-\textbf{regret}, {which} is defined as
\begin{align}
    r_{\rho}(x)= g(f, x^*_{\rho}, P^*) - g(f, x, P^*),
    \label{eq:rho_regret}
\end{align}
where $P^*(.|x) = \argmin_{P \in \mathcal{P}_{n,\rho}} \sum_{w}P(w|x) f(x,w), \forall x$.

While our framework in this work can be adopted to various distribution divergences, we focus on the specific case when $D$ is $\chi^2$-divergence: 
$D(P, Q) = \frac{1}{2}\int_{\Omega} (\frac{dP}{dQ} -1)^2 dQ, \forall P,Q$. From here on, we {refer} $\mathcal{P}_{n,\rho}$ as the $\chi^2$ ball {with} $D$ being $\chi^2$-divergence. In particular, the distributionally robust optimization problem in Equation (\ref{eq:drbqo}) is equivalent to the variance-regularized optimization in Equation (\ref{eq:variance_reg}) when the variance is sufficiently high, as justified by the following theorem:
\begin{thm}[Modified from \cite{nipsNamkoongD17}]
Let $Z \in [M_0, M_1]$ be a random variable (e.g., $Z = f(x,w)$ for any fixed $x$), $\rho \geq 0$, $M = M_1 - M_0$, $s_n^2 = Var_{\hat{P}_n}[Z]$, $s^2 = Var[Z]$, and $OPT = \inf_{P} \left\{  \mathbb{E}_{P}[Z]:  P \in  \mathcal{P}_{n,\rho} \right\}$. Then $\max \left\{ \sqrt{2\rho s^2_n} - 2M\rho, 0 \right\}
    \leq \mathbb{E}_{\hat{P}_n}[Z] - OPT 
    \leq \sqrt{2\rho s^2_n}.$
Especially if $s^2 \geq \max\{ 24 \rho, \frac{16}{n},  \frac{1}{n s^2} \} M^2$, then $OPT = \mathbb{E}_{\hat{P}_n}[Z] -  \sqrt{2\rho s^2_n}$ 
with probability at least $1 - \exp(- \frac{n s^2}{36 M^2})$. 
\end{thm}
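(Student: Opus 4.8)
The plan is to reduce the inner problem in Equation~(\ref{eq:drbqo}) to a finite-dimensional program, prove the two-sided deviation bound by Cauchy--Schwarz plus one explicit primal construction, and then upgrade the construction to the exact identity via a variance-concentration argument. First I would observe that $D(P,\hat{P}_n)<\infty$ forces $P\ll\hat{P}_n$, so every feasible $P$ is supported on $\{w_1,\dots,w_n\}$ and may be identified with a vector $p\in\Delta_n$; writing $z_i$ for the value of $Z$ at $w_i$ and $\bar z:=\mathbb{E}_{\hat{P}_n}[Z]=\tfrac1n\sum_i z_i$, the constraint reads $\tfrac{1}{2n}\sum_{i=1}^n (np_i-1)^2\le\rho$ and, since $\sum_i(\tfrac1n-p_i)=0$, the quantity of interest is $\mathbb{E}_{\hat{P}_n}[Z]-\mathbb{E}_P[Z]=\sum_i(\tfrac1n-p_i)(z_i-\bar z)$. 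The feasible set is compact, so $OPT$ is attained.

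For the upper bound, set $v_i:=\sqrt n(\tfrac1n-p_i)$ and $u_i:=(z_i-\bar z)/\sqrt n$, so that $\sum_i v_i^2 = 2D(P,\hat{P}_n)\le 2\rho$ and $\sum_i u_i^2 = s_n^2$; Cauchy--Schwarz then gives $\mathbb{E}_{\hat{P}_n}[Z]-\mathbb{E}_P[Z]=\sum_i v_iu_i\le\sqrt{2\rho s_n^2}$ for every feasible $P$, hence $\mathbb{E}_{\hat{P}_n}[Z]-OPT\le\sqrt{2\rho s_n^2}$, and the trivial choice $P=\hat{P}_n$ gives the floor $0$. Equality in Cauchy--Schwarz forces the ``linear tilt'' $p_i-\tfrac1n\propto(z_i-\bar z)$, which is exactly the candidate for the lower bound: take $\tilde p_i:=\tfrac1n-\lambda(z_i-\bar z)$ with $\lambda:=\sqrt{2\rho}/(n s_n)$, chosen so the $\chi^2$ constraint is tight and the gap equals $\sqrt{2\rho s_n^2}$. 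When $2\rho M^2\le s_n^2$ one has $|\lambda(z_i-\bar z)|\le\lambda M\le\tfrac1n$, so $\tilde p_i\ge 0$, $\tilde P$ is feasible, and $\mathbb{E}_{\hat{P}_n}[Z]-OPT=\sqrt{2\rho s_n^2}$ \emph{exactly}; in the complementary small-variance regime one clips to $\hat p_i:=(\tilde p_i)_+$ and renormalizes, bounding the clipped mass by $O(\rho)$, checking the renormalized vector still lies in the $\chi^2$-ball, and showing the resulting loss in the objective is at most $2M\rho$. Together this yields the lower envelope $\max\{\sqrt{2\rho s_n^2}-2M\rho,\,0\}$.

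For the probabilistic statement, the key point is that by the previous paragraph $OPT=\mathbb{E}_{\hat{P}_n}[Z]-\sqrt{2\rho s_n^2}$ on the event $\{s_n^2\ge 2\rho M^2\}$, so it remains to lower-bound the empirical variance $s_n^2$. Since $w_1,\dots,w_n$ are i.i.d.\ from $P_0$ and $Z\in[M_0,M_1]$ has range $M$, a Bernstein/Bennett-type concentration inequality for the empirical variance of a bounded variable gives $s_n^2\ge s^2/2$ with probability at least $1-\exp(-n s^2/(36 M^2))$; the three stated lower bounds on $s^2$ (in terms of $24\rho M^2$, $16M^2/n$, and the remaining term) are precisely what makes that deviation bound applicable and forces $s^2/2\ge 2\rho M^2$, which closes the argument. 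The Cauchy--Schwarz step is routine; I expect the real work to be (i) the truncation bookkeeping in the small-variance case --- verifying that the clipped, renormalized distribution remains in $\mathcal{P}_{n,\rho}$ and that the objective degrades by no more than $2M\rho$ with exactly that constant --- and (ii) tuning the variance-concentration bound so that the exponent comes out as $n s^2/(36 M^2)$ under exactly the three stated conditions.
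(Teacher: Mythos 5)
Your outline is essentially sound, and it follows the same route as the source this theorem is imported from: the paper itself gives no proof of this statement (it is cited as a modification of Namkoong and Duchi, 2017), so the comparison is to that argument, which likewise identifies feasible $P$ with $p\in\Delta_n$, gets the upper bound $\sqrt{2\rho s_n^2}$ from Cauchy--Schwarz against the $\chi^2$ constraint, attains it with the linear tilt $p_i=\tfrac1n-\lambda(z_i-\bar z)$ when the variance is large, and converts this to the high-probability exact identity via concentration of the empirical variance. Two remarks. First, the clipping-and-renormalizing step you flag as ``the real work'' is actually unnecessary for the stated bound: the lower envelope $\sqrt{2\rho s_n^2}-2M\rho$ is positive only when $s_n^2>2\rho M^2$, which is precisely the regime in which your tilt $\tilde p$ is feasible (since $|z_i-\bar z|\le M$) and already yields the stronger value $\sqrt{2\rho s_n^2}$; in the complementary regime the claimed lower bound is $\le 0$ and holds trivially because $\hat{P}_n\in\mathcal{P}_{n,\rho}$. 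So the two cases cover everything and no truncation bookkeeping is needed. Second, the probabilistic step is the only place where real constant-chasing remains: you need a Bernstein/Maurer--Pontil-type bound on the sample variance (or sample standard deviation, which is self-bounding) of a variable with range $M$ to get $s_n^2\ge s^2/2$ with failure probability $\exp(-ns^2/(36M^2))$, and the three conditions $s^2\ge 24\rho M^2$, $s^2\ge 16M^2/n$, $ns^4\ge M^2$ are exactly what absorb the bias of $s_n^2$ and the linear Bernstein term while ensuring $s^2/2\ge 2\rho M^2$; your plan is consistent with this, but the constants are asserted rather than verified, which is acceptable given that this is the content supplied by the cited reference.
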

The intuition for this equivalence is that the $\chi^2$ ball and the variance penalty term in Equation (\ref{eq:variance_reg}) are both quadratic \citep{DBLP:conf/aistats/StaibWJ19}. Figure \ref{fig:2d_simplex} illustrates $\chi^2$ balls with various radii on the $3$-dimensional simplex. 

\begin{figure}[t]
    \centering
    \includegraphics[scale=0.25]{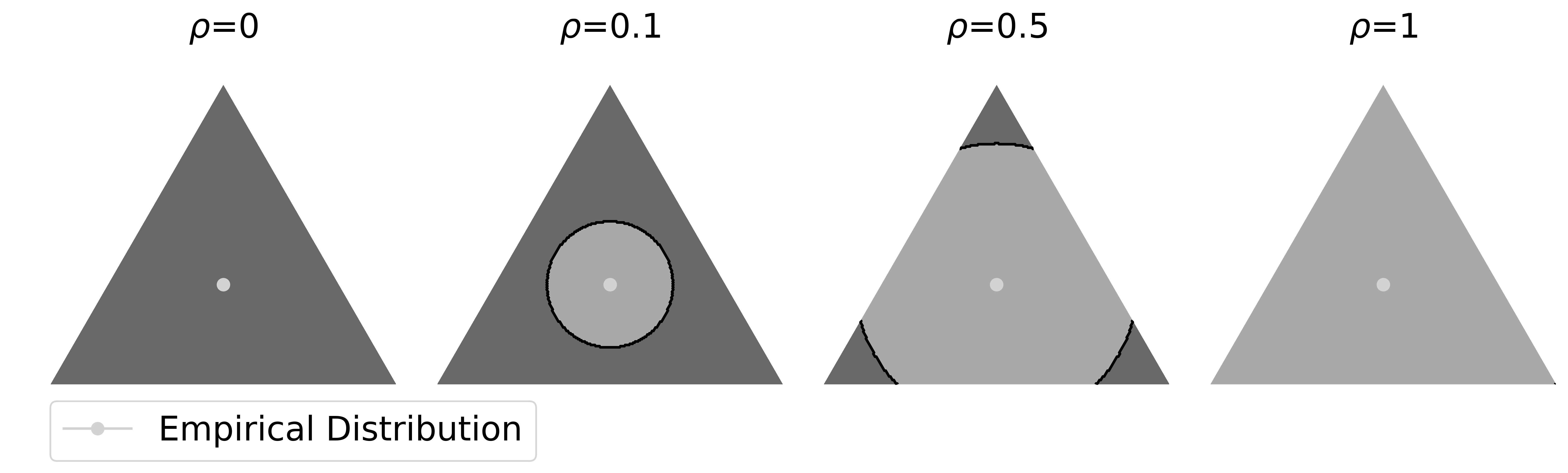}
    \caption{The $\chi^2$ balls with various radii $\rho$ on the $n$-dimensional simplex ($n=3$ in this example). The simplex, the $\chi^2$ balls and the empirical distribution are represented in dim gray, dark gray and light gray color, respectively. The $\chi^2$ ball with $\rho = 0$ reduces to a singleton containing only the empirical distribution while the ball becomes the entire simplex for $\rho \geq \frac{n-1}{2}$.}
    \label{fig:2d_simplex}
\end{figure}

\textbf{Failure of standard methods}.
Various methods have been developed for achieving small regret in maximizing $g(f,x, P_0) = \mathbb{E}_{P_0(w)} [f(x, w)]$ for some distribution $P_0(w|x) = P_0(w)$ \citep{Williams:2000:SDC:932015,DBLP:conf/nips/SwerskySA13,Toscano_IntegralBO_18}. These methods leverage the relationships in Equation (\ref{eq:quadrature_mu}) and (\ref{eq:quadrature_sigma}) to infer the posterior mean and variance of the expected function $g(f,x, P_0)$ from those of $f$. The inferred posterior mean and variance for $g(f,x, P_0)$ are then used in certain ways to acquire new points. While this is useful in the standard setting when we know $P_0$, {it is not useful when} we only have the empirical distribution $\hat{P}_n$. Specifically, an optimal solution found by these methods in the problem associated with the empirical distribution may be sub-optimal to that associated with the true distribution $P_0$. 

An illustrative example is depicted in Figure \ref{fig:figure1} where the averaged trajectories of our proposed DRBQO (detailed in Section \ref{section:alogirthm}) and a standard BQO baseline (detailed in Section \ref{section:experiment}) are also shown. Due to a limited number of samples of $P_0$, the Monte Carlo estimate $\mathbb{E}_{\hat{P}_n(w)}[f(x,w)]$ results in a spurious expected objective in this case. By resorting to the empirical distribution $\hat{P}_n$ constructed from the limited set of samples, the standard BQO baseline ignores the distributional uncertainty and converges to the optimum of the spurious expected objective. The same limitation applies to the standard BQO optimization methods, e.g., \cite{Williams:2000:SDC:932015,DBLP:conf/nips/SwerskySA13,Toscano_IntegralBO_18,DBLP:conf/wsc/PearceB17a} whose goal is to find a global non-robust maximum. 
\section{Algorithmic Approach}
\label{section:alogirthm}
Our main proposed algorithm is presented in Algorithm \ref{pseudo_code:drbqo}.
In the standard Bayesian quadrature problem in Equation (\ref{eq:mc}), we can easily adopt standard Bayesian optimization algorithms such as expected improvement (EI) \citep{mockus1978application} and an upper confidence bound (UCB) (e.g., GP-UCB \citep{DBLP:conf/icml/SrinivasKKS10}) using quadrature relationships in Equation (\ref{eq:quadrature_mu}) and (\ref{eq:quadrature_sigma}) \citep{DBLP:conf/nips/SwerskySA13}. However, like $g_{bv}(x)$ in Equation (\ref{eq:variance_reg}), $\min_{P \in \mathcal{P}_{n,\rho}} \mathbb{E}_{P(w)} [f(x,w)]$ {does not follow} a GP if $f$ follows a GP. This difficulty hinders the adoption of EI-like and UCB-like algorithms to our setting. 
We overcome this problem using posterior sampling \citep{DBLP:journals/mor/RussoR14}. 

\begin{algorithm}
\DontPrintSemicolon
\KwInput{Prior \texttt{GP}($\mu_0$, $k$), horizon $T$, fixed sample set $S_n$, confidence radius $\rho \geq 0, C_0 = k$.}

    \For{$t = 1$ to $T$}    
    { 
    
    
    \tcc{Posterior sampling}
            Sample $\tilde{f}_t \sim $ \texttt{GP}$\left(\mu_{t-1}, C_{t-1}\right)$. \; 
    \label{alg:line:ps}
            
    \tcc{A surrogate DR optimization}
        Choose $\displaystyle x_t \in \argmax_{x \in \mathcal{X}} \min_{P \in \mathcal{P}_{n,\rho}}  \mathbb{E}_{P} [\tilde{f}_t(x,w)]$. \; 
    \label{alg:line:dro}

    \tcc{Highest posterior variance}
    Choose $ \displaystyle w_t = \argmax_{w \in S_n} C_{t-1}(x_t, w; x_t, w)$. \; 
    \label{alg:line:w}

    Observe reward $\hat{y}_t \leftarrow f(x_t, w_t) + \epsilon_t$. \;
    
    Perform update \texttt{GP} to get $\mu_{t}$ and $C_{t}$.}

\KwOutput{
$\displaystyle \argmax_{ x \in \{x_1, ..., x_T\}} \min_{P \in \mathcal{P}_{n,\rho}}  \mathbb{E}_{P} [\mu_T(x,w)]$.
}
\caption{\texttt{DRBQO}: Distributionally Robust Bayesian quadrature optimization}
\label{pseudo_code:drbqo}
\end{algorithm}

The main idea of our algorithm is to sample and solve a surrogate distributionally robust optimization problem at each step guided by posterior sampling (lines \ref{alg:line:ps} and \ref{alg:line:dro} in Algorithm \ref{pseudo_code:drbqo}). In practice, we follow \cite{DBLP:conf/nips/Hernandez-LobatoHG14} to perform posterior sampling (line \ref{alg:line:ps} in Algorithm \ref{pseudo_code:drbqo}). 
Similar to the way posterior sampling is applied to standard Bayesian optimization problem \citep{DBLP:conf/nips/Hernandez-LobatoHG14}, a new point is selected according to the probability it is optimal in the sense of distributional robustness. One of the advantages of posterior sampling is that it avoids the need for confidence bound such as UCB. This is useful for our setting because the non-Gaussian nature of the distributionally robust objective makes it difficult to construct a deterministic confidence upper bound. 

Due to the convexity of the expectation with respect to a distribution, we can efficiently compute the value (therefore the gradients) of the inner minimization in line 8 of Algorithm \ref{pseudo_code:drbqo} in an analytical form via Lagrangian multipliers, {as presented in Proposition} \ref{prop:kkt}. 

\begin{prop}
\label{prop:kkt}
Let $l = (l_1, ..., l_n) \in \mathbb{R}^n$ (e.g., $l = \left( \tilde{f}_t(x,w_1), ..., \tilde{f}_t(x,w_n) \right)$ in line \ref{alg:line:dro} of Algorithm \ref{pseudo_code:drbqo}), $\hat{P}_n = (\frac{1}{n}, ..., \frac{1}{n})$ {being} the weights of the empirical distribution, $\Delta_n$ being the $n$-dimensional simplex,
    $\mathcal{P}_{n,\rho} = \bigg\{P \in \Delta_n  \bigg| \frac{1}{2}\int_{\Omega} (\frac{dP}{d\hat{P}_n} -1)^2 d\hat{P}_n \leq \rho \bigg\}$
being the $\chi^2$-ball around the empirical distribution with radius $\rho$. 
Then, the optimal weights $p = (p_1, ..., p_n) = \argmin_{q \in \mathcal{P}_{n,\rho}} q^T l$ satisfy the systems of {relations} with variables $(p, \lambda, \eta)$:
\begin{numcases}{}
\label{kkt:compute_pi}
\lambda p_i = \frac{1}{n} \max \{-l_i - \eta, 0\}, \forall 1 \leq i \leq n \nonumber \\
\label{kkt:compute_eta}
\eta |A| + n \lambda = -\sum_{i \in A} l_i \text{ where } A = \{i: l_i \leq -\eta \} \nonumber \\
\label{kkt:rho_equation}
\lambda \left(2\rho + 1 - n \| p\|_2^2 \right)  = 0 \\ 
n \| p\|_2^2 \leq 2\rho + 1, \text{ and } \gamma \geq 0. \nonumber 
\label{kkt:dual_1}
\end{numcases}

\end{prop}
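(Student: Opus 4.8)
The plan is to recognize the inner minimization as a convex program and read off the stated relations as its KKT system. First I would put the $\chi^2$ constraint in closed form on the simplex: for $P=(p_1,\dots,p_n)\in\Delta_n$ and $\hat P_n=(1/n,\dots,1/n)$ the Radon--Nikodym quotient at index $i$ is $n p_i$, so
\begin{equation*}
D(P,\hat P_n)=\frac{1}{2n}\sum_{i=1}^n (n p_i-1)^2=\frac{1}{2}\bigl(n\|p\|_2^2-1\bigr),
\end{equation*}
where I used $\sum_i p_i=1$. Hence $\mathcal{P}_{n,\rho}=\{p\in\Delta_n:\,n\|p\|_2^2\le 2\rho+1\}$, and $\min_{q\in\mathcal{P}_{n,\rho}}q^T l$ is the minimization of a linear objective over the intersection of the simplex with a Euclidean ball: a convex program with a linear objective, one convex quadratic inequality, and affine (in)equalities. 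The feasible set is compact and nonempty, so a minimizer exists; for $\rho>0$ the point $\hat P_n$ is strictly feasible ($n\|\hat P_n\|_2^2=1<2\rho+1$), so Slater's condition holds and the KKT conditions are necessary and sufficient. The case $\rho=0$ is immediate, since then $\mathcal{P}_{n,0}=\{\hat P_n\}$.

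Next I would form the Lagrangian with multiplier $\lambda\ge 0$ for $n\|p\|_2^2-2\rho-1\le 0$ (written as $\tfrac{\lambda}{2}(n\|p\|_2^2-1-2\rho)$), multipliers $\gamma_i\ge 0$ for $-p_i\le 0$, and $\eta\in\mathbb{R}$ for $\sum_i p_i=1$ (entering as $+\eta(\sum_i p_i-1)$, which fixes the sign conventions that make the statement come out). Stationarity in $p_i$ gives $l_i+\lambda n p_i-\gamma_i+\eta=0$. Combining with complementary slackness $\gamma_i p_i=0$ yields the first relation: if $l_i<-\eta$ then $p_i>0$ forces $\gamma_i=0$ and $\lambda n p_i=-l_i-\eta$, while if $l_i\ge-\eta$ then $p_i=0$ and $\gamma_i=l_i+\eta\ge 0$; both cases are summarized by $\lambda p_i=\frac{1}{n}\max\{-l_i-\eta,0\}$.

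Summing that identity over $i$ and using $\sum_i p_i=1$ gives $\lambda=\frac{1}{n}\sum_{i\in A}(-l_i-\eta)$ with $A=\{i:\,l_i\le-\eta\}$ (boundary indices $l_i=-\eta$ contribute zero, so $\le$ or $<$ is immaterial), i.e. $\eta|A|+n\lambda=-\sum_{i\in A}l_i$, the second relation. Complementary slackness for the quadratic constraint reads $\lambda\bigl(\tfrac{1}{2}(n\|p\|_2^2-1)-\rho\bigr)=0$, equivalently $\lambda(2\rho+1-n\|p\|_2^2)=0$, the third relation; primal and dual feasibility give $n\|p\|_2^2\le 2\rho+1$ and $\lambda\ge 0$, the last line. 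I would then close by noting that this system reduces the computation of $p$ from $(l,\rho)$ to a one-dimensional search: unless $l$ is constant the quadratic constraint is active, so the third relation pins down $n\|p\|_2^2$, and $\eta$ is the unique root of the monotone scalar equation obtained by substituting the first relation into $\sum_i p_i=1$ — this is the bisection search referred to in the algorithm.

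\textbf{Main obstacle.} The one point needing care is the degenerate regime $\lambda=0$, which is exactly when the $\chi^2$ constraint is inactive at the optimum (e.g. $l$ constant, or $\rho$ large enough that the unconstrained simplex minimizer already lies in the ball): there the first relation becomes vacuous in $p$ and one must check the system is still consistent, with $\gamma_i=l_i+\eta\ge 0$ and $p$ supported on $\argmin_i l_i$. Apart from this boundary bookkeeping, the argument is a direct application of convex duality, so I do not expect real difficulty.
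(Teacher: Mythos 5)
Your proposal is correct and follows essentially the same route as the paper's proof: write the $\chi^2$ ball constraint as $n\|p\|_2^2\le 2\rho+1$, form the Lagrangian for the linear objective over the simplex intersected with this ball, invoke Slater/strong duality so the KKT conditions are necessary and sufficient, and read off the three stated relations from stationarity, complementary slackness, and feasibility. The only cosmetic differences are that the paper keeps the constraint in the form $\frac{1}{2n}\sum_i(np_i-1)^2\le\rho$ (absorbing the resulting extra $\lambda$ term into $\eta$) and runs its bisection on $\lambda$ using monotonicity of $\|p(\lambda)\|_2^2$ rather than on $\eta$; your explicit treatment of the degenerate $\lambda=0$ case is a welcome addition the paper leaves implicit.
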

\begin{proof}
The constrained minimization $\min_{p \in \mathcal{P}_{n,\rho}} p^T l$  is a convex optimization problem which forms the Lagrangian: $L(p, \lambda, \eta, \zeta) = p^Tl - \lambda \left(\rho - \frac{1}{2n} \sum_{i=0}^n (n p_i - 1)^2 \right)  -\eta (1 - \sum_{i=1}^n p_i) - \sum_{i=1}^n \zeta_i p_i$
where $p \in \mathbb{R}^n, \lambda \geq 0$, $\eta \in \mathbb{R}$, and $\zeta \in \mathbb{R}^n_{+}$. 

The system of linear equations in the proposition {emerges from}  Karush-Kuhn-Tucker (KKT) conditions and simple rearrangements. Note that since the primal problem is convex, the duality gap is zero and the KKT conditions are the sufficient and necessary conditions for the primal problem. 

Notice the first two equations 
that we can compute $p_i$ in terms of $\lambda$. These $p_i = p_i(\lambda)$ are then substituted into Equation (\ref{kkt:rho_equation}) to solve for $\lambda$. In practice, we can use bisection search \citep{NamkoongD16} to solve for $\lambda$ satisfying  Equation (\ref{kkt:rho_equation}). The details of this algorithm and of Proposition \ref{prop:kkt} are presented in the supplementary material.
\end{proof}


\section{Theoretical Analysis}
\label{section:theory}

For the sake of analysis, we adopt the definition of the $T$-period regret and Bayesian regret from \cite{DBLP:journals/mor/RussoR14} to our setting. In particular, we define a policy $\pi$ as a mapping from the history $H_t = (x_1, w_1, P_1, ..., x_{t-1}, w_{t-1}, P_{t-1})$ to $(x_t,w_t, P_t)$ where $P_i \in \mathcal{P}_{n,\rho} \times \mathcal{X}, \forall i$. 

\begin{defn}[$T$-period regret] The $T$-period regret of a policy $\pi$ is defined by
\begin{align*}
    Regret(T, \pi, f) = \sum_{t=1}^T \mathbb{E} \left[ g(f, x^*, P^*) - g(f, x_t, P_t) | f \right],
\end{align*}
where $P^*(.|x) = \argmin_{P \in \mathcal{P}_{n,\rho}}\mathbb{E}_{P(w)}[f(x,w)], \forall x \in \mathcal{X}$, $x^* \in \argmax_{x \in \mathcal{X}} \min_{P \in \mathcal{P}_{n,\rho}} \mathbb{E}_{P(w)}[f(x,w)] $, and for all $T \in \mathbb{N}$. 
\end{defn}
\begin{defn}[$T$-period Bayesian regret]
The $T$-period Bayesian regret of a policy $\pi$ is the expectation of the regret with respect to the prior over $f$, 
\begin{align}
    BayesRegret(T,\pi) &= \mathbb{E} [Regret(T, \pi, f)].
\end{align}
\end{defn}
For simplicity, we focus our analysis on the case where $\mathcal{X}$ is finite and $\mathcal{P}_{n,\rho}$ is a finite subset of the $\chi^2$ ball of radius $\rho$. Similar to \cite{DBLP:conf/icml/SrinivasKKS10}, the results can be extended to infinite sets $\mathcal{X}$ and the entire $\chi^2$ ball using discretization trick of \cite{DBLP:conf/icml/SrinivasKKS10} as long as a smoothness condition (i.e., the partial derivatives of $f$ are bounded with high probability) is satisfied (see Theorem 2 in \cite{DBLP:conf/icml/SrinivasKKS10}).  

\begin{thm} 
\label{thm:bayesregret}
Assume $\mathcal{X}$ is a finite subset of $\mathbb{R}^d$, and $\mathcal{P}_{n,\rho}$ is a finite subset of the $\chi^2$ ball of radius $\rho$. Let $\pi^{DRBQO}$ be the DRBQO policy presented in Algorithm \ref{pseudo_code:drbqo}, $\gamma_T$ be the maximum information gain defined in \cite{DBLP:conf/icml/SrinivasKKS10}, then for all $T \in \mathbb{N}$, 
\begin{align*}
    &BayesRegret(T, \pi^{DRBQO}) \leq 1  \\
    &+ \frac{(\sqrt{2\log \frac{(1+ T^2) |\mathcal{X}| |\mathcal{P}_{n,\rho}|}{ \sqrt{2\pi}} } +B) \sqrt{2\pi}}{|\mathcal{X}| |\mathcal{P}_{n,\rho}|} + \frac{2 \gamma_T \sqrt{(1 + 2\rho)n}}{1 + \sigma^{-2}} \\ 
    &+ 2\sqrt{ T \gamma_T (1 + \sigma^{-2})^{-1} \log \frac{(1+ T^2) |\mathcal{X}| |\mathcal{P}_{n,\rho}|}{ \sqrt{2\pi}}  }.
\end{align*}

\end{thm}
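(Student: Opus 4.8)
The plan is to adapt the Thompson-sampling regret analysis of \cite{DBLP:journals/mor/RussoR14}, with the scalar reward replaced by the quadrature functional $g$ and with confidence bounds built from the quadrature posterior of Equations~(\ref{eq:quadrature_mu})--(\ref{eq:quadrature_sigma}). Abbreviate an action as $a=(x,P)\in\mathcal A:=\mathcal X\times\mathcal P_{n,\rho}$, put $N=|\mathcal X||\mathcal P_{n,\rho}|$, $a^\star=(x^*,P^*)$, and $a_t=(x_t,P_t)$ with $P_t(\cdot\mid x)=\argmin_{P\in\mathcal P_{n,\rho}}\mathbb E_P[\tilde f_t(x,w)]$; let $\mathcal F_{t-1}$ be the $\sigma$-algebra generated by the first $t-1$ queries and observations. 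The fact that drives the proof is the defining property of posterior sampling: conditionally on $\mathcal F_{t-1}$ the sample $\tilde f_t$ has the same law as $f$, so $a_t$, being a measurable function of $\tilde f_t$ and $\mathcal F_{t-1}$, is conditionally distributed as $a^\star$, while $\mathbb E[g(f,a)\mid\mathcal F_{t-1}]=\mu_{t-1}(a)$ for every fixed $a$. Introducing the confidence function $U_t(a)=\mu_{t-1}(a)+c_t\,\sigma_{t-1}(a)$ with $c_t=\sqrt{2\log((1+t^2)N/\sqrt{2\pi})}$, which is $\mathcal F_{t-1}$-measurable, the conditional equidistribution of $a_t$ and $a^\star$ gives $\mathbb E[U_t(a^\star)\mid\mathcal F_{t-1}]=\mathbb E[U_t(a_t)\mid\mathcal F_{t-1}]$ and hence
\[
\mathbb E[g(f,a^\star)-g(f,a_t)]=\underbrace{\mathbb E[g(f,a^\star)-U_t(a^\star)]}_{(\mathrm I)_t}+\underbrace{\mathbb E[U_t(a_t)-g(f,a_t)]}_{(\mathrm{II})_t}.
\]
Since $a_t$ is conditionally independent of $f$ given $\mathcal F_{t-1}$, $\mathbb E[g(f,a_t)-\mu_{t-1}(a_t)]=0$, so $(\mathrm{II})_t=c_t\,\mathbb E[\sigma_{t-1}(a_t)]$, and it remains to bound $\sum_{t\le T}(\mathrm I)_t$ and $\sum_{t\le T}(\mathrm{II})_t$.

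For $(\mathrm I)_t$ I would invoke a Gaussian maximal inequality over the finite action set. Conditionally on $\mathcal F_{t-1}$, each $g(f,a)-\mu_{t-1}(a)$ is $\mathcal N(0,\sigma_{t-1}^2(a))$ with $\sigma_{t-1}(a)\le\sigma_0(a)\le 1$ (the prior quadrature variance is at most $1$ because $k\le1$ and $P(\cdot\mid x)$ is a probability measure), so $\mathbb E[(g(f,a)-U_t(a))^+]\le \frac{1}{\sqrt{2\pi}}e^{-c_t^2/2}$; a union bound then yields $(\mathrm I)_t\le \mathbb E[\max_a (g(f,a)-U_t(a))^+]\le \frac{N}{\sqrt{2\pi}}e^{-c_t^2/2}=\frac{1}{1+t^2}$, which sums over $t$ to a constant (the leading $1$ in the stated bound). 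The $B$-dependent remainder comes from a more careful accounting of the low-probability event $\{\exists a: g(f,a)>U_t(a)\}$, on which the overshoot is controlled by $|g(f,x,P)|\le\sup_z|f(z)|\le\|f\|_k\sqrt{k(z,z)}\le B$ and $|\mu_{t-1}|\le B$; together these give the first two ($T$-independent) lines of the bound.

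The heart of the proof is $\sum_{t\le T}(\mathrm{II})_t=\sum_{t\le T}c_t\,\mathbb E[\sigma_{t-1}(x_t,P_t)]$, and this is where Proposition~\ref{prop:kkt} is used. Every $P\in\mathcal P_{n,\rho}$ corresponds to a weight vector $p$ on $S_n$ with $\|p\|_1=1$ and $n\|p\|_2^2\le 1+2\rho$; writing $\Sigma=[\,C_{t-1}(x_t,w_i;x_t,w_j)\,]_{i,j}\succeq 0$, which has $\mathrm{tr}(\Sigma)=\sum_i C_{t-1}(x_t,w_i;x_t,w_i)\le n\,C_{t-1}(x_t,w_t;x_t,w_t)$ because the algorithm sets $w_t=\argmax_{w\in S_n}C_{t-1}(x_t,w;x_t,w)$, we obtain
\[
\sigma_{t-1}^2(x_t,P_t)=p^\top\Sigma p\ \le\ \min\{\,C_{t-1}(x_t,w_t;x_t,w_t),\ \|p\|_2^2\,\mathrm{tr}(\Sigma)\,\},
\]
which converts the quadrature posterior width into pointwise posterior widths at the queried contexts. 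Combining $c_t\le c_T$, Cauchy--Schwarz over $t$, and the maximum-information-gain bound $\sum_{t\le T}C_{t-1}(x_t,w_t;x_t,w_t)\le 2\gamma_T/\log(1+\sigma^{-2})$ of \cite{DBLP:conf/icml/SrinivasKKS10} produces the $\sqrt{T\gamma_T(\cdot)\log(\cdot)}$ term, while the second branch of the $\min$ (carrying the factor $\|p\|_2^2\le(1+2\rho)/n$) produces the residual $\gamma_T\sqrt{(1+2\rho)n}$ term.

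Finally, extending from finite $\mathcal X$ and finite $\mathcal P_{n,\rho}$ to their continuous counterparts is the standard discretization argument of \cite{DBLP:conf/icml/SrinivasKKS10} under the stated smoothness hypothesis. The step I expect to be the main obstacle is the variance conversion in the last display: the algorithm observes $f$ only at a single context $w_t$ per round, whereas the regret is measured through $g(f,x_t,P_t)=\mathbb E_{P_t}[f(x_t,w)]$, an average of $f(x_t,\cdot)$ over all of $S_n$ under an adversarially chosen reweighting. Bounding $\sigma_{t-1}(x_t,P_t)$ by quantities that telescope through the information gain while keeping the dependence on the radius $\rho$ and the sample size $n$ explicit is exactly where the quadratic ($\chi^2$) structure of Proposition~\ref{prop:kkt} --- in particular the constraint $n\|p\|_2^2\le 1+2\rho$ --- is indispensable, and obtaining the sharpest constants there requires care.
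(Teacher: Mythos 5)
Your skeleton is the same as the paper's: the Russo--Van Roy decomposition with an $\mathcal F_{t-1}$-measurable upper-confidence function $U_t=\mu_{t-1}+\sqrt{\beta_t}\,\sigma_{t-1}$ built from the quadrature posterior (the paper's Lemma \ref{lem:ps_decompose}), the Gaussian positive-part/union-bound treatment of the optimal-action term giving the leading $1$ (Lemma \ref{lemma:second_term}), the domination $\sigma_{t-1}(x,P)\le\sigma_{t-1}(x,w_t)$ from the max-variance choice of $w_t$ (Lemma \ref{lem:cov_inequality}), and the information-gain/Cauchy--Schwarz step giving the $\sqrt{T\gamma_T\log(\cdot)}$ term. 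Where you genuinely diverge is the term $\mathbb{E}\sum_t[U_t(x_t,P_t)-g(f,x_t,P_t)]$: you collapse it to $\sum_t\sqrt{\beta_t}\,\mathbb{E}[\sigma_{t-1}(x_t,P_t)]$ by noting that $(x_t,P_t)$ is a function of the posterior sample $\tilde f_t$ and hence conditionally independent of $f$ given the history. For the policy-chosen $P_t(\cdot\mid x)=\argmin_P\mathbb{E}_P[\tilde f_t(x,w)]$ this reduction is legitimate, and it actually yields a \emph{tighter} bound (roughly $1+2\sqrt{T\gamma_T(1+\sigma^{-2})^{-1}\log(\cdot)}$) that trivially implies the stated one, since the remaining two terms are nonnegative. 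The paper instead refuses this shortcut --- its stated difficulty is that the robust quadrature couples the weighting with $f$ and destroys Gaussianity --- and handles the term by truncating $f$ into the confidence band and its complement ($J_1+J_2+J_3$ in Lemma \ref{lem:first_term}), comparing $P_t$ with a dominating $P_t^*$ on the band; that machinery is what you would need if the regret were measured against the $f$-dependent adversarial distribution at $x_t$ (the $\rho$-regret notion of Equation (\ref{eq:rho_regret})), where your conditional-independence step fails. So your route buys simplicity and a cleaner constant for the theorem as formally stated; the paper's route buys robustness to that coupling, at the price of the two extra terms.

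Two of your side claims about reproducing the exact form of the bound are, however, wrong as sketched. First, the $B$-dependent term does not come from ``more careful accounting'' of term (I): in the paper it arises from the complement-of-the-band pieces $J_2$ and $J_3$ of the \emph{first} term, where $|\mu_{t-1}|\le B$ and $\sum_t e^{-\beta_t/2}\le\sqrt{2\pi}/(|\mathcal X||\mathcal P_{n,\rho}|)$. Second, the second branch of your $\min$ inequality cannot produce the additive $\frac{2\gamma_T\sqrt{(1+2\rho)n}}{1+\sigma^{-2}}$ term: from $\|p\|_2^2\le(1+2\rho)/n$ and $\mathrm{tr}(\Sigma)\le n\,\sigma_{t-1}^2(x_t,w_t)$ you only get $\sigma_{t-1}(x_t,P_t)\le\sqrt{1+2\rho}\,\sigma_{t-1}(x_t,w_t)$, i.e.\ a multiplicative $\sqrt{1+2\rho}$ on the $\sqrt{T\gamma_T}$ term, which is weaker than your first branch and of a different order than the paper's term. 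In the paper that term comes from the selection-bias analysis on $B_t$ (truncated-Gaussian mean, Lemma \ref{lem:lem2}.3, followed by Cauchy--Schwarz over $w$ with the $\chi^2$ constraint), producing $\sqrt{n(1+2\rho)}\sum_t\sigma_{t-1}^2(x_t,w_t)=O(\gamma_T)$. Neither of these mis-attributions sinks your proof --- under your reduction those terms are simply unnecessary --- but as written they suggest steps you could not actually carry out, so either drop them and prove the smaller bound, or adopt the paper's truncation argument if you want the regret against the true worst-case distribution.
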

Note that $\gamma_T$ can be bounded for three common kernels: linear, SE and Mat\'ern kernels in \cite{DBLP:conf/icml/SrinivasKKS10}. Using these bounds, Theorem \ref{thm:bayesregret} suggests that DRBQO has sublinear Bayesian regret for common kernels such as linear, SE and Mat\'ern kernels. 
\begin{proof}
We leverage two proof techniques from \cite{DBLP:journals/mor/RussoR14} to derive this bound including posterior sampling regret decomposition and the connection between posterior sampling and UCB. However, an extension from the Bayesian regret bound to our case is non-trivial. The main difficulty is that the $\rho$-robust quadrature distributions $\argmin_{P \in \mathcal{P}_{n,\rho}} \mathbb{E}_{P(w)}[f(x,w)]$ are random variables and the resulting quadrature $\min_{P \in \mathcal{P}_{n,\rho}} \mathbb{E}_{P(w)}[f(x,w)]$ does not follow a GP. We overcome this difficulty by decomposing the range $\mathbb{R}$ of $f(x,w)$ into a set of carefully designed disjoint subsets, using several concentration inequalities for Gaussian distributions, and leveraging the mild assumptions of $f$ from the problem setup. The details are presented in the {supplementary material}. 
\end{proof}

\section{Experiment}
\label{section:experiment}
In this section, we empirically validate the performance of DRBQO by comparing against several baselines in synthetic and $n$-fold cross-validation hyperparameter tuning experiments.  

We focus on the BQO baselines that directly substitute the inferred posterior mean $\mu_t(x, \hat{P}_n)$ (in Equation (\ref{eq:quadrature_mu})) and variance $\sigma_t^2(x, \hat{P}_n)$ (in Equation (\ref{eq:quadrature_sigma})) of  $g(f,x, \hat{P}_n)$ into any standard acquisition (e.g., EI and GP-UCB) to achieve small regret in maximizing $g(f,x, \hat{P}_n)$. More advanced BQO baseline methods, e.g.,  \citep{Toscano_IntegralBO_18} are expected to perform poorly in the distributional uncertainty setting because they are not set out to account for the robust solutions. There is a distinction between sampled points and report points by each baseline algorithm. A sampled point is a suggested point regarding where to sample next while a report point is chosen from all the sampled points (up to any iteration) based on the objective function that an algorithm aims at optimizing. In standard noiseless Bayesian optimization, sampled points and report points are identical. However, this is not necessarily the case in BQO where the objective function has expectation form and is not directly queried.  In particular, we consider the following baselines:   


\begin{itemize}
    \item MTBO: Multi-task Bayesian optimization  \citep{DBLP:conf/nips/SwerskySA13} is a typical BQO algorithm in which  the inferred posterior mean and variance are plugged into the standard EI acquisition to select $x_t$. In addition, each $w_t$ in this case represents a task and MTBO uses multi-task kernels to model the task covariance. Conditioned on $x_t$,  $w_t$ is selected such that the corresponding task yields the highest EI. We include MTBO only in the cross-validation hyperparameter tuning experiments.
    \item BQO-EI: This algorithm is similar to MTBO except for two distinctions. First,  $w_t$ is selected such that it yields the highest posterior variance on $f$, similar to our algorithm (see line \ref{alg:line:w} in Algorithm
    \ref{pseudo_code:drbqo}). Second, this uses kernels defined on the Cartesian product space $\mathcal{X} \times \Omega$ instead of the multi-task kernels as in MTBO. In addition, the report point at time $t$ is $\argmax_{ x \in x_{1:t}}  \mathbb{E}_{\hat{P}_n(w)} [\mu_t(x,w)]$. 

    \item Maximin-BQO-EI: This method is the same as BQO-EI except that the report point is $\argmax_{ x \in x_{1:t}} \min_{P \in \mathcal{P}_{n,\rho}}  \mathbb{E}_{P(w)} [\mu_t(x,w)]$. 
    
    \item BQO-TS: This method is a non-robust version of our proposed DRBQO. The only distinctions between BQO-TS and DRBQO are in the way $x_t$ is selected (line \ref{alg:line:dro} of Algorithm \ref{pseudo_code:drbqo}) and the way a report point is chosen. In BQO-TS, $x_t$ is selected with respect to the empirical distribution as follows: $x_t \in \argmax_{x \in \mathcal{X}} \mathbb{E}_{\hat{P}_n(w)} [\tilde{f}_t(x,w)] $, and the report point at time $t$ is chosen as $\argmax_{ x \in x_{1:t}}  \mathbb{E}_{\hat{P}_n(w)} [\mu_t(x,w)]$. 
    
    
    \item Maximin-BQO-TS: This is the same as BQO-TS except that the final report point is $\argmax_{ x \in x_{1:t}} \min_{P \in \mathcal{P}_{n,\rho}}  \mathbb{E}_{P(w)} [\mu_t(x,w)]$. 
    
    \item Emp-DRBQO: This is the same as DRBQO except that the report point is chosen as $\argmax_{ x \in x_{1:t}}  \mathbb{E}_{\hat{P}_n(w)} [\mu_t(x,w)]$.
\end{itemize}


\begin{figure}[t]
    \centering
    \includegraphics[scale=0.43]{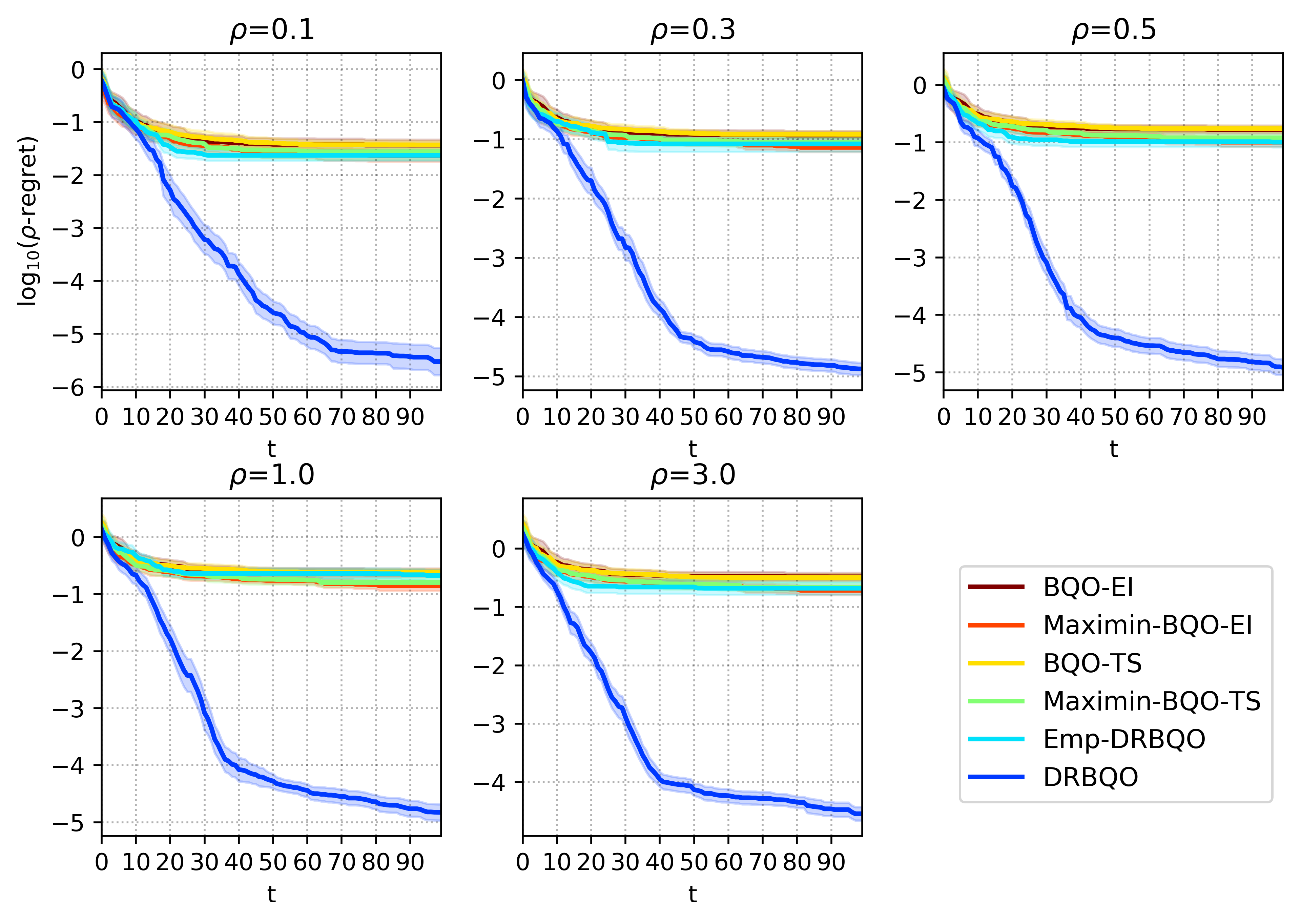}

    \caption{The best so-far $\rho$-regret values (plotted on the $\log_{10}$ scale) of the baseline BQO methods and our proposed method DRBQO for the synthetic function in Section \ref{section:experiment}. DRBQO significantly outperforms the baselines with respect to the $\rho$-regret in this experiment. {The larger the value of} $\rho$ (i.e., {the more conservative against the adversarial distributional perturbation), the higher is the} $\rho$-regret of the non-robust baselines.}
    \label{fig:plot_rho_regret_logistic}
\end{figure}

\textbf{Synthetic function}. 
The distributional uncertainty problem is more pronounced when $f(x,w)$ is more significantly distinct across different values of $w \in S_n$, i.e., $f(x,w)$ experiences high variance along the dimension of $w$. Inspired by the logistic regression and the experimental evaluation from the original variance-based regularization work \citep{NamkoongD16}, we use a logistic form for synthetic function: $f(x,w) = -\log(1 + \exp(x^T w))$, where $x,w \in \mathbb{R}^d$. The true distribution $P_0$ is the standard Gaussian $\mathcal{N}(0,I)$. In this example, we use $d=2$ for better visualization. We sample $n=10$ values of $w$ from $\mathcal{N}(w; 0,I)$ and fix this set for the empirical distribution $\hat{P}_n(w) = \frac{1}{n} \sum_{i=1}^n \delta(w-w_i)$. The true expected function $\mathbb{E}_{P_0(w)}[f(x,w)]$ and the empirical (Monte Carlo) estimate function $\mathbb{E}_{\hat{P}_n(w)}[f(x,w)]$ are illustrated in Figure \ref{fig:figure1} (a) and (b), respectively. In this illustration, the Monte Carlo estimate function catastrophically shifts the true optimum to a spurious point due to the limited data in estimating $P_0$.

\begin{figure}
    \centering
    \includegraphics[scale=0.52]{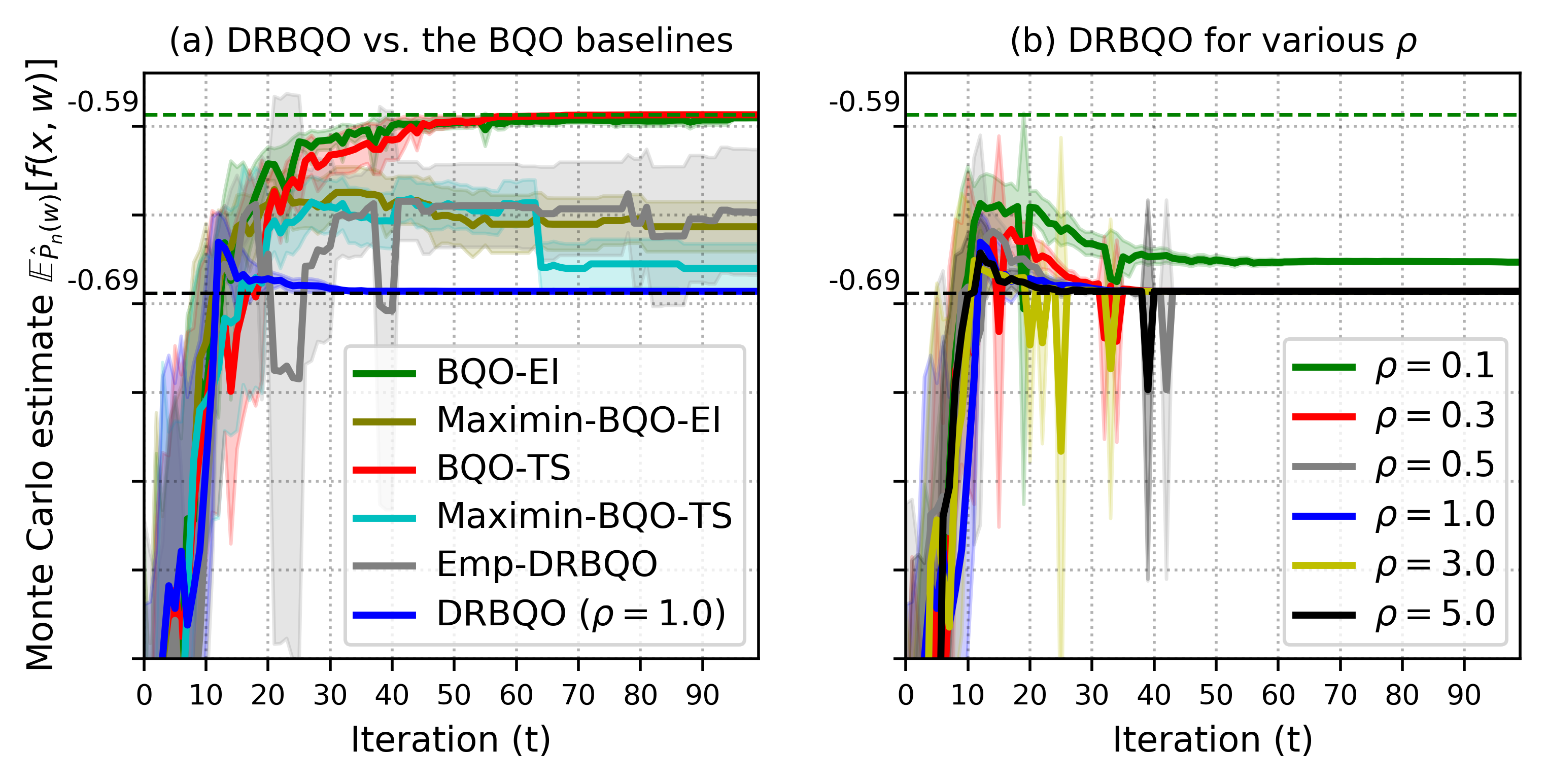}
    \caption{The empirical expected function $\mathbb{E}_{\hat{P}_n(w)}[f(x,w)]$ evaluated at each point $x$ reported at time $t$ by DRBQO and the standard BQO baselines (a), and by DRBQO for various values of $\rho$ (b). In this example, $\mathbb{E}_{\hat{P}_n(w)}[f(x,w)]$ has a maximum value of $-0.59$ while it has a value of $-0.69$ evaluated at the optimum of the true expected function $\mathbb{E}_{P_0(w)}[f(x,w)]$. 
    The BQO baselines achieve higher values of the empirical expected function than DRBQO but DRBQO converges to the distributionally robust solutions.}
    \label{fig:mc_values}
\end{figure}

We initialize the comparative algorithms by selecting $12$ uniformly random inputs $(x,w) \in \mathcal{X}\times S_n$, and {we} keep these initial points the same for all the algorithms. We use the squared exponential kernel $k_{SE}$ defined on the Cartesian product space of $x$ and $w$. We normalize the input and output values to the unit cube, and resort to marginal maximum likelihood to learn the GP hyperparameters \citep{RasmussenW06} every time we acquire a new observation. The time horizon for all the algorithms is $T=100$. We report the results using two evaluation metrics: the $\rho$-regret as defined in Equation (\ref{eq:rho_regret}) and the value of the empirical expected function $\mathbb{E}_{\hat{P}_n(w)}[f(x,w)]$ evaluated at point $x$ reported by an algorithm at time $t$. The former metric quantifies how close a certain point is to the distributionally robust solution while the latter measures the performance of each algorithm from a perspective of the empirical distribution. We repeat the experiment $30$ times and report the average mean and the $96\%$ confidence interval for each evaluation metric. 

The first results are presented in Figure \ref{fig:plot_rho_regret_logistic}. We report over a range of $\rho$ values $\{0.1, 0.3, 0.5, 1.0, 3.0\}$ capturing the degree of conservativeness against the distributional uncertainty. Note that if $\rho > \frac{n-1}{2} = 4.5$, it represents the most conservative case as the $\chi^2$ ball covers the entire $n$-dimensional simplex. We observe from Figure \ref{fig:plot_rho_regret_logistic} that DRBQO significantly outperforms the baselines in this experiment. Also notice that when we increase the conservativeness requirement (i.e., increasing the values of $\rho$), the standard BQO baselines have higher $\rho$-regret. This is because the standard BQO baselines are rigid and do not allow for any conservativeness in the optimization. Therefore, these algorithms converge to the optimum of the spurious Monte Carlo estimate function. 




We highlight the comparative algorithms in the second metric in Figure \ref{fig:mc_values} where we report the value of the empirical expected function $\mathbb{E}_{\hat{P}_n(w)}[f(x,w)]$ at each point $x$ reported by each algorithm at time $t$. Since the BQO baselines are set out to maximize the Monte Carlo estimate function, they achieve higher values in this metric than DRBQO. However, the non-robust solutions returned by the BQO baselines are sub-optimal with respect to the $\rho$-regret in this case, as seen from the corresponding results in Figure \ref{fig:plot_rho_regret_logistic}.

In addition, we evaluate the effectiveness of the selection of $w$ at line \ref{alg:line:w} in Algorithm \ref{pseudo_code:drbqo}. Currently, $w_t$ is selected such that it yields the highest posterior mean given $x_t$. This is to improve exploration in $f$. We compare this selection strategy with the random strategy in which $w_t$ is uniformly selected from $S_n$ regardless of $x_t$. The result is reported in Figure \ref{fig:w_selection}. In this figure, the post-fix RandW denotes the random selection of $w_t$. We observe that random selection of $w_t$ can hurt the convergence of both the standard BQO baselines and DRBQO. Furthermore, the selection of $w_t$ for the highest posterior variance (line \ref{alg:line:w} of Algorithm \ref{pseudo_code:drbqo}) in DRBQO is also meaningful in proving Theorem \ref{thm:bayesregret}. More empirical evaluations in other synthetic functions are presented in the {supplementary material}. 

\begin{figure}
    \centering
    \includegraphics[scale=0.52]{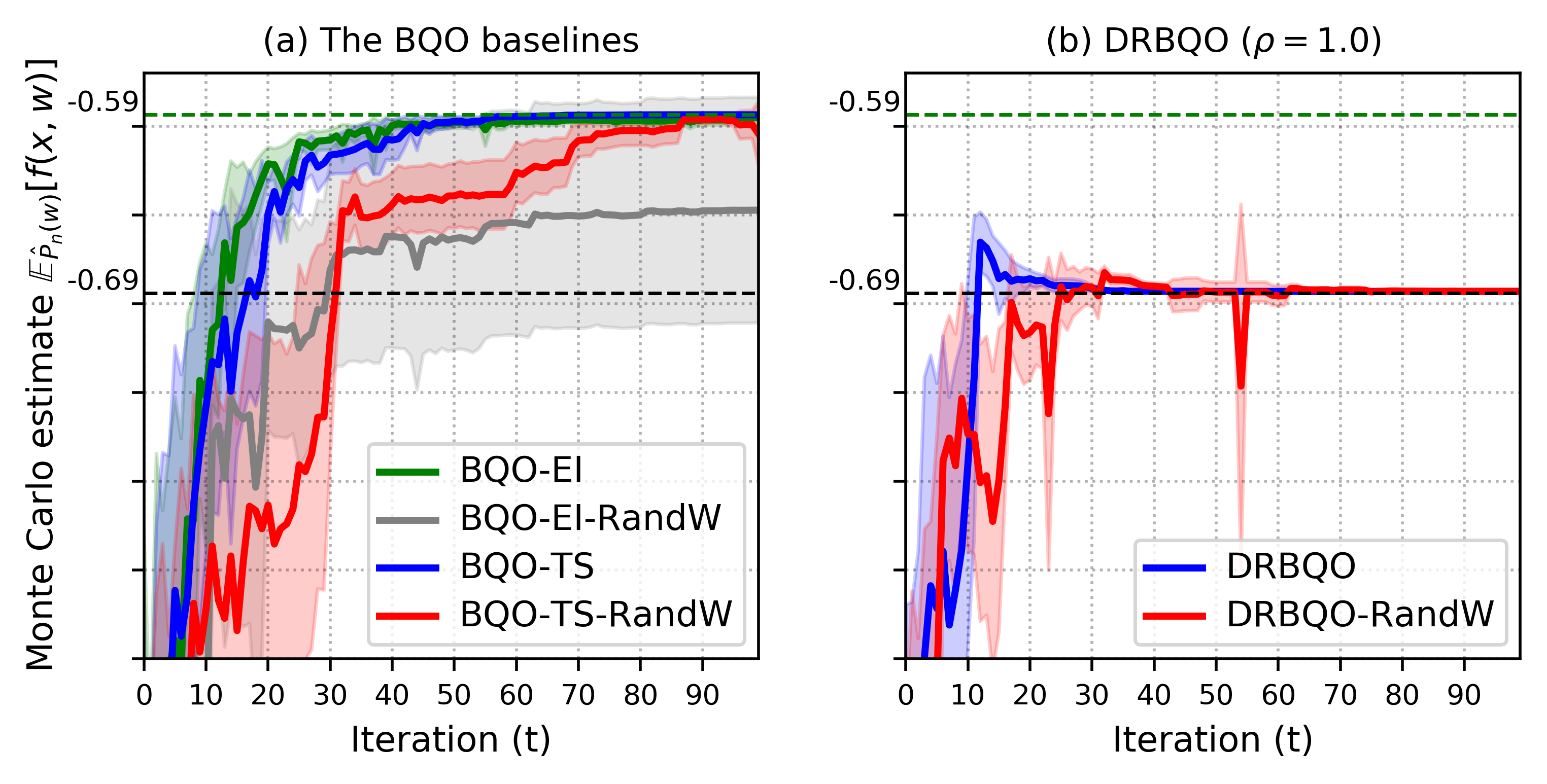}
    \caption{{The effect of different methods of selecting} $w_t$ on the performance of the BQO baselines (a) and DRBQO (b). We observe that random selection of $w_t$ can either slow down or prevent the convergence of both the standard BQO baselines and DRBQO in this experiment.}
    \label{fig:w_selection}
\end{figure}

\textbf{Cross-validation hyperparameter tuning}. A typical real-world problem that possesses the quadrature structure of Equation ($\ref{eq:stochastic_opt}$) is $n$-fold cross-validation hyperparameter tuning. The $n$-fold cross-validation performance can be thought of as a Monte Carlo approximate of the true model performance. Given a fixed learning algorithm associated with a set of hyperparameter $x$, let $f(x,w)$ be an approximate model performance trained on $\mathcal{D} \backslash w$ and evaluated on the validation set $w$ where $\mathcal{D}$ denotes the training data set, $w$ denotes a subset of training points sampled from $\mathcal{D}$, and $\mathcal{D} \backslash w$ denotes everything in $\mathcal{D}$ but not in $w$. Increasing the number of folds reduces the variance in the model performance estimate, but it is expensive to evaluate the cross-validation performance for a large value of $n$. Therefore, a class of Bayesian quadrature optimization methods is beneficial in this case in which we actively select both the algorithm's hyperparameters $x_t$ and a fold $w_t$ to evaluate without the need of training the model in all $n$ folds \citep{DBLP:conf/nips/SwerskySA13}. 

However, the standard BQO methods assume the empirical distribution for each fold and are set out to maximize the average $n$-fold values.  In practice, the average $n$-fold value can be a spurious measure for model performance when there is sufficient discrepancy of the model performance across different folds. This scenario fits well into our distributional uncertainty problem in Equation (\ref{eq:stochastic_opt}) where the fold distribution $P_0(w)$ is unknown in practice. 
In addition, we use a one-hot $n$-dimensional vector to represent each of the $n$ folds. This offers two main advantages: (i) it allows us to leverage the standard kernel such as $k_{SE}$ on the product space $\mathcal{X} \times \Omega$; (ii) it is able to model different covariance between different pairs of folds. For example, the covariance between fold 1 and fold 3 is not necessary the same as that between fold 8 and fold 10 though the fold indicator difference are the same ($2 = 10 - 8 = 3 -1$ in this example). 

We evaluate this experiment on two common machine learning models using the MNIST dataset \citep{LeCun98}: ElasticNet and Convolutional Neural Network (CNN). For ElasticNet, we tune the $l_1$ and $l_2$ regularization hyperparamters, and use the SGDClassificer implementation from the scikit-learn package \citep{pedregosa2012scikitlearn}. For CNN, we use the standard architecture with 2 convolutional layers.
In CNN, we optimize over three following hyperparamters: the learning rate $l$ and the dropout rates in the first and second pooling layers. We used Adam optimizer \citep{DBLP:journals/corr/KingmaB14} in $20$ epochs with the batch size of $128$. 

\begin{table}
\caption{Classification error (\%) of ElasticNet and CNN on the MNIST test set tuned by different algorithms. Each bold number in the DRBQO group denotes the classification error that is smaller than any corresponding number in the baseline group. } \label{table:cls_err}
\begin{center}
\begin{tabular}{lccc}
\textbf{Methods}  &\textbf{ElasticNet} &\textbf{CNN} \\
\hline
MTBO       & $8.576 \pm 0.080$      & $1.712 \pm 0.263$ \\
BQO-EI      & $9.166 \pm 0.433$      & $1.634 \pm 0.157$\\
BQO-TS      & $8.625 \pm 0.116$      & $1.820 \pm 0.227$\\ 
\hline
DRBQO($\rho=0.1$)   & $\pmb{8.450} \pm 0.022$   &  $1.968 \pm 0.310$ \\
DRBQO($\rho=0.3$)   &  $\pmb{8.505} \pm 0.082$    & $\pmb{1.495} \pm 0.106$ \\ 
DRBQO($\rho=0.5$)   & $\pmb{8.515} \pm 0.075$   & $1.869 \pm 0.232$\\ 
DRBQO($\rho=1$)   &  $\pmb{8.526} \pm 0.065$  & $\pmb{1.444} \pm 0.071$ \\ 
DRBQO($\rho=3$)   & $\pmb{8.387} \pm 0.013$ &  $\pmb{1.374} \pm 0.066$ \\ 
DRBQO($\rho=5$)   &  $\pmb{8.380} \pm 0.022$ &  $\pmb{1.321} \pm 0.061$ \\ 
\end{tabular}
\end{center}
\end{table}

In addition to the previous baselines in the synthetic experiment, we also consider the multi-task Bayesian optimization (MTBO) \citep{DBLP:conf/nips/SwerskySA13} baseline for this application. MTBO is a standard method for cross-validation hyperparameter tuning. 

In this experiment, we also use $k_{SE}$ kernel defined on the Cartesian product space $\mathcal{X} \times \Omega$ of $x$ and $w$ for all the methods except for MTBO which uses task kernel on the domain of $w$. We initialize $6$ (respectively $9$) initial points and keep these initial points the same for all the algorithms in ElasticNet (respectively CNN). Each of the algorithms are run for $T=60$ (respectively $T=90$) iterations in ElasticNet (respectively CNN). We repeat the experiment $20$ times and report the average and standard deviation values of an evaluation metric. We split the training data into $n=10$ folds and keep these folds the same for all algorithms. We compare DRBQO against the baselines via a practical metric: the classification error in the test set evaluated at the final set of hyperparameters reported by each algorithm at the final step $T$. This metric is a simple but practical measure of the robustness of the hyperparameters over the unknown data distribution $P_0$. The result is reported in Table \ref{table:cls_err}. We observe that DRBQO outperforms the baselines for most of the considered values of $\rho$, especially for large values of $\rho$ (i.e., $\rho \in \{1, 3, 5\}$ in this case). More results for the case of Support Vector Machine (SVM) are presented in the supplementary material.

\section{Discussion}
\label{section:discussion}
In this work, we have proposed a posterior sampling based algorithm, namely DRBQO, that efficiently seeks for the robust solutions under the distributional uncertainty in
 Bayesian quadrature optimization. Compared to the standard BQO algorithms, DRBQO provides a flexibility to control the conservativeness against distributional perturbation. We have demonstrated the empirical effectiveness and characterized the theoretical convergence of DRBQO in sublinear Bayesian regret. 



\newpage
\subsubsection*{Acknowledgements}
This research was partially funded by the Australian Government through the Australian Research Council (ARC). Prof Venkatesh is the recipient of an ARC Australian Laureate Fellowship (FL170100006).

\bibliographystyle{plainnat}
\bibliography{refs}

\newpage
\section*{Appendix A}
In this appendix, we provide a detailed proof of Theorem 2 in the main text about a sublinear bound on the Bayesian regret of the DRBQO algorithm. For simplicity, we focus on the case where the decision space $\mathcal{X}$ and the distributional uncertainty set $\mathcal{P}_{n,\rho}$ are finite. The results can be extended to infinite sets using the discretization trick as in \cite{DBLP:conf/icml/SrinivasKKS10}.

\textbf{Notations and conventions}. Unless explicitly specified otherwise, we denote  a conditional distribution $P(. | x) \in \mathcal{P}_{n,\rho}, \forall x \in \mathcal{X}$ by $P$, i.e., $P \in  \mathcal{P}_{n,\rho} \times \mathcal{X}$. Recall the definition of the quadrature functional in the main text as
\begin{align*}
    g(f,x,P) = \int P(w|x) f(x,w) dw,
\end{align*}
for any $x \in \mathcal{X}$ and $P \in \mathcal{P}_{n,\rho} \times \mathcal{X}$.
Let $x^* \in \argmax_{x \in \mathcal{X}} \min_{P \in \mathcal{P}_{n,\rho}} \mathbb{E}_{P(w)}[f(x,w)] $, and $P^*(.|x) = \argmin_{P \in \mathcal{P}_{n,\rho}}\mathbb{E}_{P(w)}[f(x,w)], \forall x \in \mathcal{X}$. Since $f$ is a stochastic process (a GP in our case), $x^*$ and $P^*$ are also random variables. The DRBQO algorithm $\pi^{DRBQO}$ maps at a time step $t$ the history $H_t = (x_1, w_1, P_1, ..., x_{t-1},w_{t-1}, P_{t-1})$ to a new decision $(x_t,w_t) \in \mathcal{X} \times S_n$ and conditional distribution $P_t \in \mathcal{P}_{n,\rho} \times \mathcal{X}$ as presented in line 2-3 of Algorithm 1 in the main text. The practical implementation of Algorithm 1 samples $(x_t, P_t)$ as follows: 
$x_t \in \argmax_{x \in \mathcal{X}} \min_{P \in \mathcal{P}_{n,\rho}} \mathbb{E}_{P(w)}[\tilde{f}_t(x,w)] $, and $P_t(.|x) = \argmin_{P \in \mathcal{P}_{n,\rho}}\mathbb{E}_{P(w)}[\tilde{f}_t(x,w)], \forall x \in \mathcal{X}$ where $\tilde{f}_t$ is a function sample of $f$ at time $t$ {from its posterior GP}.

\begin{lem}
\label{lem:ps_decompose}
For any sequence of deterministic functions $\{ U_t: \mathcal{X} \times \mathcal{P}_{n,\rho} \times \mathcal{X} \rightarrow \mathbb{R} | t \in \mathbb{N} \}$, 
\begin{align*}
    &BayesRegret(T, \pi^{DRBQO}) \nonumber\\
    &= \mathbb{E} \sum_{t=1}^T \left[ U_t(x_t, P_t) - g(f, x_t, P_t)  \right] \nonumber \\ 
    &+\mathbb{E} \sum_{t=1}^T \left[ g(f, x^*, P^*) - U_t(x^*, P^*)    \right],
\end{align*}
for all $T \in \mathbb{N}$. 
\end{lem}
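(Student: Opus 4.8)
## Proof Proposal for Lemma 1 (Posterior Sampling Regret Decomposition)

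The plan is to exploit the defining property of posterior sampling: conditioned on the history $H_t$, the sampled function $\tilde{f}_t$ is drawn from the posterior of $f$, so $(x_t, P_t)$ — being a deterministic function of $\tilde{f}_t$ via the inner max-min — has the same conditional distribution as $(x^*, P^*)$, which is the same deterministic function applied to $f$. First I would fix $t$ and condition on $H_t$. The key observation is that $U_t$ is deterministic, so $\mathbb{E}[U_t(x_t, P_t) \mid H_t] = \mathbb{E}[U_t(x^*, P^*) \mid H_t]$, because $(x_t, P_t) \mid H_t \overset{d}{=} (x^*, P^*) \mid H_t$. This is the crux: it lets me swap the arguments of the deterministic surrogate $U_t$ between the algorithm's choice and the optimum.

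The remaining steps are bookkeeping. Starting from the definition
\begin{align*}
    BayesRegret(T, \pi^{DRBQO}) = \mathbb{E} \sum_{t=1}^T \left[ g(f, x^*, P^*) - g(f, x_t, P_t) \right],
\end{align*}
I would insert $U_t(x_t, P_t) - U_t(x_t, P_t)$ (equivalently, add and subtract the surrogate term) and regroup the summand as
\begin{align*}
    \left[ g(f, x^*, P^*) - U_t(x^*, P^*) \right] + \left[ U_t(x^*, P^*) - U_t(x_t, P_t) \right] + \left[ U_t(x_t, P_t) - g(f, x_t, P_t) \right].
\end{align*}
Taking $\mathbb{E}[\,\cdot \mid H_t]$ of the middle bracket gives zero by the posterior-sampling identity above; then the tower property ($\mathbb{E}[\,\cdot\,] = \mathbb{E}[\,\mathbb{E}[\,\cdot \mid H_t]\,]$) removes it from the total expectation, and linearity of expectation plus summation over $t$ yields exactly the claimed two-term decomposition.

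The main obstacle — really the only subtlety worth care — is justifying the conditional equality in distribution of $(x_t, P_t)$ and $(x^*, P^*)$ rigorously. One must verify that $x^*$ and $P^*$ are genuinely deterministic measurable functions of $f$ (the argmax/argmin over the \emph{finite} sets $\mathcal{X}$ and $\mathcal{P}_{n,\rho}$, with a fixed tie-breaking rule, makes this measurable), that $\tilde{f}_t$ is conditionally distributed as $f$ given $H_t$ (this is the definition of posterior sampling in Algorithm~\ref{pseudo_code:drbqo}, line~\ref{alg:line:ps}), and that $(x_t, P_t)$ is obtained from $\tilde{f}_t$ by the \emph{same} function. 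One also needs $U_t$ to be $H_t$-measurable (it is, being deterministic) and integrability of $U_t(x_t,P_t)$ and $g(f,x_t,P_t)$ so the expectations split — this follows from finiteness of the sets and the $\|f\|_k \le B$ bound together with Gaussianity. Finiteness of $\mathcal{X}$ and $\mathcal{P}_{n,\rho}$, already assumed, sidesteps any measurable-selection complications, so the argument is clean; the substantive work of actually choosing $U_t$ (a UCB-type surrogate) and bounding the two resulting terms is deferred to the subsequent lemmas.
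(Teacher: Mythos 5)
Your proposal is correct and follows essentially the same route as the paper: the key identity $\mathbb{E}[U_t(x_t,P_t)\mid H_t]=\mathbb{E}[U_t(x^*,P^*)\mid H_t]$ from the posterior-sampling property (that $(x_t,P_t)$ and $(x^*,P^*)$ are identically distributed given $H_t$, since $U_t$ is deterministic), followed by adding and subtracting the surrogate terms and applying the tower property. The extra measurability and integrability remarks you add are fine but go beyond what the paper spells out.
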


\begin{proof}
Given $H_t$, $\pi^{DRBQO}$ samples $(x_t, P_t)$ according to the probability they are optimal, i.e., $(x_t, P_t) \sim Pr( x^*, P^* | H_t)$. Thus, conditioned on $H_t$, $(x^*, P^*)$ and $(x_t, P_t)$ are identically distributed. As a result, given a deterministic function $U_t$, we have $\mathbb{E}[U_t(x^*, P^*)] = \mathbb{E}[ U_t(x_t, P_t)]$. Therefore, 
\begin{align*}
    &\mathbb{E} \left[ g(f, x^*, P^*) - g(f, x_t, P_t) \right] \\
    &=  \mathbb{E} \left[ 
    \mathbb{E} \left[ g(f, x^*, P^*) - g(f, x_t, P_t) \right] | H_t
    \right] \\
    &= \mathbb{E} \left[ 
    \mathbb{E} \left[  U_t(x_t, P_t) - g(x_t, P_t) \right] | H_t
    \right]  \\ 
    &+ \mathbb{E} \left[ 
    \mathbb{E} \left[   g(f, x^*, P^*) - U_t(x^*, P^*)  \right] | H_t
    \right] \\
    &= \mathbb{E} \left[  U_t(x_t, P_t) - g(f, x_t, P_t) \right] \\
    &+ \mathbb{E} \left[   g(f, x^*, P^*) - U_t(x^*, P^*)  \right].
\end{align*}
\end{proof}

\begin{lem}
\label{lem:lem2}
Let $X \sim \mathcal{N}(\mu, \sigma^2)$.  
\begin{enumerate}
    \item For all $\beta \geq 0$, we have 
    \begin{align*}
    Pr\{ |X - \mu| > \beta^{1/2} \sigma \} \leq e^{-\beta/2}. 
\end{align*}

\item If $\mu \leq 0$, then 
\begin{align*}
    \mathbb{E}[ \max\{X,0\} ] = \frac{\sigma}{\sqrt{2\pi}} e^{ \frac{-\mu^2}{2 \sigma^2} }.
\end{align*}

\item For all $a \leq b$, we have 
\begin{align*}
    \mathbb{E}[ X | a < X < b] = \mu - \sigma^2 \frac{p(a) - p(b)}{\phi(a) - \phi(b)}, 
\end{align*}
where $p(x)$ and $\phi(x)$ denote the density function and cumulative distribution function of $X$, respectively. 
\end{enumerate}
\end{lem}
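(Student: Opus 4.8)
The plan is to handle the three parts with two standard devices: for part (1), a reduction to the standard normal combined with a sharp one-sided Gaussian tail bound; for parts (2) and (3), the Gaussian \emph{score identity} $p'(x) = -\frac{x-\mu}{\sigma^2}\,p(x)$, equivalently $x\,p(x) = \mu\,p(x) - \sigma^2 p'(x)$, which converts every first-moment integral of $X$ over an interval into boundary evaluations of $p$ through a single integration. Throughout, $p$ and $\phi$ denote the density and cumulative distribution function of $X$, as in the statement.

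For part (1), I would set $Z = (X-\mu)/\sigma \sim \mathcal{N}(0,1)$, so that $\Pr\{|X-\mu| > \beta^{1/2}\sigma\} = \Pr\{|Z| > \beta^{1/2}\} = 2\,\Pr\{Z > \beta^{1/2}\}$ by symmetry. I would then invoke the sharp one-sided bound $\Pr\{Z > t\} \le \tfrac12 e^{-t^2/2}$ for $t\ge 0$; the constant $\tfrac12$ is essential, since it is exactly what cancels the factor $2$ from symmetrization and yields the stated $e^{-\beta/2}$ rather than $2e^{-\beta/2}$. To justify that one-sided bound I would study $h(t) = \tfrac12 e^{-t^2/2} - \Pr\{Z>t\}$: here $h(0)=0$, $h(t)\to 0$ as $t\to\infty$, and $h'(t) = e^{-t^2/2}\big(\tfrac{1}{\sqrt{2\pi}} - \tfrac{t}{2}\big)$ changes sign exactly once, so $h$ rises from $0$ and then decreases monotonically back to $0$, forcing $h\ge 0$ on $[0,\infty)$. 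Substituting $t=\beta^{1/2}$ closes this part.

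For parts (2) and (3) I would integrate the score identity. Writing $\mathbb{E}[\max\{X,0\}] = \int_0^{\infty} x\,p(x)\,dx$ and substituting $x\,p(x) = \mu\,p(x) - \sigma^2 p'(x)$ gives $\int_0^\infty x\,p(x)\,dx = \mu\,(1-\phi(0)) + \sigma^2 p(0)$, using $p(\infty)=0$ and $\int_0^\infty p = 1-\phi(0) = \Pr\{X>0\}$. The boundary term is precisely $\sigma^2 p(0) = \frac{\sigma}{\sqrt{2\pi}}\,e^{-\mu^2/2\sigma^2}$, which is the claimed right-hand side. Part (3) follows from the same one-line integration over a general window: $\int_a^b x\,p(x)\,dx = \mu(\phi(b)-\phi(a)) - \sigma^2(p(b)-p(a))$, and dividing by $\Pr\{a<X<b\} = \phi(b)-\phi(a)$ produces $\mathbb{E}[X\mid a<X<b] = \mu - \sigma^2\frac{p(a)-p(b)}{\phi(a)-\phi(b)}$ after matching signs, exactly as stated.

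The one genuine subtlety, and the step I would flag as the crux for part (2), is the role of the hypothesis $\mu \le 0$ in reconciling the computed value with the stated equality. The integration above produces $\mathbb{E}[\max\{X,0\}] = \mu\,\Pr\{X>0\} + \frac{\sigma}{\sqrt{2\pi}}e^{-\mu^2/2\sigma^2}$, so the assumption $\mu\le 0$ is precisely what controls the residual $\mu\,\Pr\{X>0\}$, making it nonpositive; it vanishes, giving the identity on the nose, exactly when $\mu=0$, while for $\mu<0$ the stated right-hand side is the governing envelope of the expectation. I would therefore make the use of $\mu\le 0$ explicit rather than leave it buried in the integration, since it is this direction that is subsequently needed to bound the overshoot terms of the form $\mathbb{E}[\max\{\cdot,0\}]$ arising in the posterior-sampling regret decomposition.
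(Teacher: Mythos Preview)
Your argument is correct and considerably more detailed than the paper's own treatment, which simply records that ``the results are simple properties of normal distributions'' and gives no derivation. In particular, your route to part~(1) via the auxiliary function $h(t)=\tfrac12 e^{-t^2/2}-\Pr\{Z>t\}$ is a clean way to secure the sharp one-sided constant $\tfrac12$ that the statement needs after symmetrization, and the score-identity computations for parts~(2) and~(3) are standard and correct.

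Your flag on part~(2) is worth keeping explicit. The exact value is
\[
\mathbb{E}[\max\{X,0\}] \;=\; \mu\,\Pr\{X>0\} \;+\; \frac{\sigma}{\sqrt{2\pi}}\,e^{-\mu^2/(2\sigma^2)},
\]
so for $\mu<0$ the lemma as printed is really the inequality $\mathbb{E}[\max\{X,0\}]\le \frac{\sigma}{\sqrt{2\pi}}e^{-\mu^2/(2\sigma^2)}$, with equality only at $\mu=0$. You correctly observe that the downstream application in the posterior-sampling regret decomposition only requires this $\le$ direction, so the overall analysis is unaffected; the stated ``$=$'' should simply be read as ``$\le$''.
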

\begin{proof}
The results are simple properties of normal distributions. 
\end{proof}

\begin{lem}
\label{lem:cov_inequality}
Given $H_{t}, \forall t \in \mathbb{N}$, let $\sigma_{t}^2(x,w) := C_{t}(x,w; x,w)$  be the variance of $f(x,w)$. Then, for all $P$, all $x$ and for $w^* = \argmax_{w \in S_n} \sigma^2_{t}(x,w)$, we have
\begin{align*}
    \sigma^2_{t}(x, P) = Var[g(f, x, P) | H_t] \leq  \sigma_{t}^2(x, w^*). 
\end{align*}
\end{lem}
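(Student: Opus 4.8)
The plan is to bound the quadrature variance $\sigma_t^2(x,P)$ by expressing it through the posterior covariance $C_t$ and then using the fact that $C_t$ is a valid (positive semidefinite) covariance function. Recall from Equation~(\ref{eq:quadrature_sigma}) that
\begin{align*}
\sigma_t^2(x,P) = \int\!\!\int P(w|x)P(w'|x)\, C_t(x,w;x,w')\, dw\, dw'.
\end{align*}
Since $P(\cdot|x)$ is supported on the finite set $S_n$ (every $P \in \mathcal{P}_{n,\rho}$ is a reweighting of $\hat{P}_n$), this is a finite double sum $\sum_{i,j} p_i p_j\, C_t(x,w_i;x,w_j)$ with $p_i = P(w_i|x) \ge 0$ and $\sum_i p_i = 1$. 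In other words, $\sigma_t^2(x,P) = p^\top \Sigma\, p$ where $\Sigma = [C_t(x,w_i;x,w_j)]_{i,j}$ is the posterior covariance matrix of the Gaussian vector $(f(x,w_1),\dots,f(x,w_n))$ given $H_t$, and $p$ lies in the probability simplex.

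The key step is then to observe that $p^\top \Sigma\, p = \mathrm{Var}\big[\sum_i p_i f(x,w_i)\,\big|\,H_t\big]$ and compare this against the maximal marginal variance. One clean way: for a PSD matrix $\Sigma$ with entries $\Sigma_{ij}$, Cauchy--Schwarz applied to the inner product induced by $\Sigma$ gives $|\Sigma_{ij}| \le \sqrt{\Sigma_{ii}\Sigma_{jj}} \le \max_k \Sigma_{kk}$ for all $i,j$. Hence
\begin{align*}
p^\top \Sigma\, p = \sum_{i,j} p_i p_j \Sigma_{ij} \le \Big(\max_k \Sigma_{kk}\Big)\sum_{i,j} p_i p_j = \max_k \Sigma_{kk} = \sigma_t^2(x,w^*),
\end{align*}
since $\sum_i p_i = 1$ and $w^* = \argmax_{w \in S_n}\sigma_t^2(x,w)$. (Strictly one should note $p_i p_j \ge 0$ so the bound $\Sigma_{ij}\le \max_k\Sigma_{kk}$ may be applied termwise.) This yields exactly the claimed inequality.

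I do not anticipate a serious obstacle here; the only point requiring mild care is making the reduction from the integral form to the finite sum explicit — i.e. noting that every $P \in \mathcal{P}_{n,\rho}$ is absolutely continuous with respect to $\hat{P}_n$ and therefore supported on $S_n$, so the double integral collapses to the double sum over $S_n$ with weights summing to one. An alternative to the Cauchy--Schwarz argument, if one prefers to avoid invoking PSD-ness directly, is to write $\sum_{i,j}p_ip_j\Sigma_{ij}$ as the conditional variance of the scalar random variable $Y = \sum_i p_i f(x,w_i)$, which is itself Gaussian given $H_t$, and bound $\mathrm{Var}(Y) \le \max_i \mathrm{Var}(f(x,w_i))$ using that $Y$ is a convex combination; this requires the same covariance-matrix reasoning but may read more transparently. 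Either route is short.
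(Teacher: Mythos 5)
Your proposal is correct and follows essentially the same route as the paper: reduce the quadrature variance to the double sum $\sum_{w,w'}P(w|x)P(w'|x)C_t(x,w;x,w')$, bound each covariance term by the maximal marginal variance $\sigma_t^2(x,w^*)$, and use that the weights sum to one. Your explicit Cauchy--Schwarz step $|C_t(x,w;x,w')|\le\sqrt{C_t(x,w;x,w)\,C_t(x,w';x,w')}\le\sigma_t^2(x,w^*)$ is in fact a slightly more careful rendering of the paper's termwise bound, which states the intermediate inequality a bit loosely.
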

\begin{proof}
It follows from a simple property of posterior covariance that 
\begin{align*}
    &\sigma^2_{t}(x, P) = \sum_{w, w'} P(w|x) P(w'|x) C_{t}(x, w; x, w') \nonumber \\
    & \leq \sum_{w, w'} P(w|x) P(w'|x) C_{t}(x, w; x, w) \nonumber \\ 
    & \leq \sum_{w, w'} P(w|x) P_t(w'|x) \sigma_{t-1}^2(x, w^*) \nonumber\\
    & = \sigma_{t}^2(x, w^*). 
\end{align*}
\end{proof}

\begin{lem}
\label{lemma:second_term}
If $U_t(x,P) = \mu_{t-1}(x,P) + \sqrt{\beta_t} \sigma_{t-1}(x,P) $ where 
\begin{align*}
    &\mu_{t-1}(x,P) := \int P(w|x) \mu_{t-1} (x,w) dw, \\ 
    &\sigma^2_{t-1}(x,P) := \nonumber \\
    &\int \int C_{t-1}(x,w; x,w') P(w|x) P(w'|x) dw dw', 
\end{align*}
and $\beta_t = 2 \log \frac{(t^2 + 1)|\mathcal{X}| |\mathcal{P}_{n,\rho}|}{\sqrt{2\pi}}$, then 
\begin{align*}
    \mathbb{E} \sum_{t=1}^T [g(f, x^*, P^*) - U_t(x^*, P^*)] \leq 1,
\end{align*}
for all $T \in \mathbb{N}$.
\end{lem}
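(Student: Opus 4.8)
The plan is to show that the deterministic function $U_t$ is, conditionally on the history $H_t$, a high‑probability \emph{upper confidence bound} for the quadrature functional $g(f,\cdot,\cdot)$ at every fixed pair $(x,P)$, and then to pass from this pointwise statement to one about the \emph{random} optimiser $(x^*,P^*)$ via a union bound over the finite sets $\mathcal{X}$ and $\mathcal{P}_{n,\rho}$. This is the posterior‑sampling‑as‑UCB device specialised to the quadrature objective.

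First I would fix an arbitrary pair $(x,P)\in\mathcal{X}\times\mathcal{P}_{n,\rho}$. By the quadrature relations (\ref{eq:quadrature_mu})--(\ref{eq:quadrature_sigma}), conditionally on $H_t$ the variable $g(f,x,P)$ is Gaussian with mean $\mu_{t-1}(x,P)$ and variance $\sigma^2_{t-1}(x,P)$, so $g(f,x,P)-U_t(x,P)$ is, conditionally on $H_t$, Gaussian with mean $-\sqrt{\beta_t}\,\sigma_{t-1}(x,P)\le 0$ and variance $\sigma^2_{t-1}(x,P)$. Applying the second part of Lemma~\ref{lem:lem2} gives
\begin{align*}
\mathbb{E}\big[(g(f,x,P)-U_t(x,P))^{+}\mid H_t\big]=\frac{\sigma_{t-1}(x,P)}{\sqrt{2\pi}}\,e^{-\beta_t/2}.
\end{align*}
Crucially, the exponent $e^{-\beta_t/2}$ no longer involves the unknown variance, and by Lemma~\ref{lem:cov_inequality} together with $C_0=k\le 1$ we have $\sigma_{t-1}(x,P)\le 1$, so this conditional expectation is at most $\tfrac{1}{\sqrt{2\pi}}e^{-\beta_t/2}$, uniformly in $(x,P)$.

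Next I would remove the randomness in $(x^*,P^*)$. Since $g(f,x^*,P^*)$ depends on $P^*$ only through the conditional $P^*(\cdot\mid x^*)\in\mathcal{P}_{n,\rho}$, we have the deterministic domination
\begin{align*}
g(f,x^*,P^*)-U_t(x^*,P^*)\le\sum_{(x,P)\in\mathcal{X}\times\mathcal{P}_{n,\rho}}(g(f,x,P)-U_t(x,P))^{+}.
\end{align*}
Taking expectations, applying the tower property with the per‑pair bound above, and using $\beta_t=2\log\frac{(1+t^2)|\mathcal{X}||\mathcal{P}_{n,\rho}|}{\sqrt{2\pi}}$ yields
\begin{align*}
\mathbb{E}\big[g(f,x^*,P^*)-U_t(x^*,P^*)\big]\le|\mathcal{X}|\,|\mathcal{P}_{n,\rho}|\,\frac{1}{\sqrt{2\pi}}\,e^{-\beta_t/2}=\frac{1}{1+t^2}.
\end{align*}
Summing over $t=1,\dots,T$ and using $\sum_{t\ge 1}\frac{1}{1+t^2}\le 1$ gives the claimed bound for all $T$.

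The main obstacle is the second step: $x^*$ and $P^*$ are measurable functions of the entire draw of $f$, hence not $H_t$‑measurable, so one cannot substitute them into the Gaussian identity of Lemma~\ref{lem:lem2}; finiteness of $\mathcal{X}$ and $\mathcal{P}_{n,\rho}$ is exactly what licenses the union bound that repairs this, and this is where the $\log(|\mathcal{X}||\mathcal{P}_{n,\rho}|)$ term of $\beta_t$ originates. A secondary point requiring care is the uniform bound $\sigma_{t-1}(x,P)\le 1$: it should be routed through Lemma~\ref{lem:cov_inequality} (which dominates a quadrature posterior variance by a pointwise posterior variance) and then the normalisation $k\le 1$, so that only the prefactor, and not the exponent, in Lemma~\ref{lem:lem2} needs to be controlled.
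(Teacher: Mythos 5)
Your proof is correct and follows essentially the same route as the paper's: the Gaussian positive-part identity (Lemma~\ref{lem:lem2}.2) applied conditionally on $H_t$, the uniform bound $\sigma_{t-1}(x,P)\le 1$ via Lemma~\ref{lem:cov_inequality} and $k\le 1$, and a union bound over the finite set $\mathcal{X}\times\mathcal{P}_{n,\rho}$ absorbed by the choice of $\beta_t$ and summed over $t$. Your explicit deterministic domination of $g(f,x^*,P^*)-U_t(x^*,P^*)$ by $\sum_{(x,P)}(g(f,x,P)-U_t(x,P))^{+}$ handles the non-$H_t$-measurability of $(x^*,P^*)$ a bit more cleanly than the paper's implicit step, and you inherit the same minor slack as the paper in asserting $\sum_{t\ge 1}(1+t^2)^{-1}\le 1$ (the sum is in fact $\approx 1.08$, which only perturbs the constant).
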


\begin{proof}
The trick is to concentrate on the non-negative terms of the expectation. These non-negative terms can be bounded due to the specific choice of upper confidence bound $U_t$. 


Note that for any deterministic conditional distribution $P \in \mathcal{P}_{n,\rho} \times \mathcal{X}$, we have $g(f,x,P) \sim \mathcal{N}(\mu_{t-1}(x,P), \sigma_{t-1}^2(x,P))$, i.e., $g(f, x,P) - U_t(x,P) \sim \mathcal{N}(-\sqrt{\beta_t} \sigma_{t-1}(x,P),  \sigma_{t-1}^2(x,P) )$. It thus follows from Lemma \ref{lem:lem2}.2 that: 
\begin{align*}
    &\mathbb{E}[ \max\{  g(f,x,P) - U_t(x,P), 0 \} | H_t] \\ &= \frac{\sigma_{t-1}(x,P)}{\sqrt{2\pi}} \exp(\frac{-\beta_t}{2}) \\ 
    &= \frac{\sigma_{t-1}(x,P)}{(t^2 +1) |\mathcal{X}| |\mathcal{P}_{n,\rho}|} \leq \frac{1}{(t^2 +1) |\mathcal{X}| |\mathcal{P}_{n,\rho}|}.
\end{align*}

The final inequality above follows from Lemma \ref{lem:cov_inequality} and from the assumption that $\sigma_0(x,w) \leq 1, \forall x,w$, i.e.,  
\begin{align*}
    \sigma_{t-1}(x,P) \leq \sigma_{t-1}(x,w^*) \leq \sigma_0 (x,w^*) \leq 1, 
\end{align*}
where $w^* = \argmax_{w} C_{t-1}(x,w; x, w)$. 

Therefore, we have 
\begin{align*}
    &\mathbb{E} \sum_{t=1}^T [g(f, x^*, P^*) - U_t(x^*, P^*)] \\
    &\leq  \mathbb{E} \sum_{t=1}^T \mathbb{E} [ \max \{g(f, x^*, P^*) - U_t(x^*, P^*), 0 \} | H_t] \\ 
    &\leq \mathbb{E} \sum_{t=1}^T \sum_{x \in \mathcal{X}} \sum_{P \in \mathcal{P}_{n,\rho}}  \mathbb{E} [ \max \{g(f, x, P) - U_t(x, P), 0 \}] \\ 
    &\leq \sum_{t=1}^{\infty} \sum_{x \in \mathcal{X}} \sum_{P \in \mathcal{P}_{n,\rho}} \frac{1}{(t^2 +1) |\mathcal{X}| |\mathcal{P}_{n,\rho}|} = 1. 
\end{align*}
\end{proof}

\begin{lem} 
\label{lem:first_term}
{Given} the definition of the maximum information gain $\gamma_T$ as in \cite{DBLP:conf/icml/SrinivasKKS10}, we have
\begin{align*}
    &\mathbb{E} \sum_{t=1}^T \left[ U_t(x_t, P_t) - g(f,x_t, P_t)  \right]  \\ 
    &\leq \frac{(\sqrt{\beta_T}+B) \sqrt{2\pi}}{|\mathcal{X}| |\mathcal{P}_{n,\rho}|} + 2 \gamma_T \sqrt{(1 + 2\rho)n} (1 + \sigma^{-2})^{-1} \\ 
    &+ 2\sqrt{ T \gamma_T (1 + \sigma^{-2})^{-1} \log \frac{(1+ T^2) |\mathcal{X}| |\mathcal{P}_{n,\rho}|}{ \sqrt{2\pi}}  } ,
\end{align*}
for all $T \in \mathbb{N}$. 
\end{lem}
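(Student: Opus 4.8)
\emph{Proof plan.} The plan is to adapt the GP-UCB regret analysis of \cite{DBLP:conf/icml/SrinivasKKS10} together with the posterior-sampling--to--UCB comparison of \cite{DBLP:journals/mor/RussoR14}, while accommodating two features specific to our setting: $g(f,x_t,P_t)$ is \emph{not} a Gaussian process (because $P_t$ is chosen adaptively from the posterior sample $\tilde f_t$), and the point actually queried at round $t$ is $(x_t,w_t)$ rather than $(x_t,P_t)$. The one property we really use in place of ``$g$ is a GP'' is this: since $\mathcal{X}$ and $\mathcal{P}_{n,\rho}$ are finite and $\tilde f_t$ is drawn independently of $f$ given $H_t$, for \emph{each fixed} pair $(x,P)$ the quadrature $g(f,x,P)\mid H_t$ is exactly $\mathcal{N}\!\big(\mu_{t-1}(x,P),\sigma_{t-1}^2(x,P)\big)$, and conditioning additionally on the data-dependent event $(x_t,P_t)=(x,P)$ does not change this law.

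First I would decompose each summand as
\[
U_t(x_t,P_t)-g(f,x_t,P_t)=\big(\mu_{t-1}(x_t,P_t)-g(f,x_t,P_t)\big)+\sqrt{\beta_t}\,\sigma_{t-1}(x_t,P_t),
\]
and split over the ``good event'' $\mathcal{E}_t=\{\,g(f,x,P)\ge \mu_{t-1}(x,P)-\sqrt{\beta_t}\,\sigma_{t-1}(x,P)\ \text{for all }(x,P)\in\mathcal{X}\times\mathcal{P}_{n,\rho}\,\}$ and its complement. By Lemma~\ref{lem:lem2}.1 and a union bound over the finite set, $Pr(\mathcal{E}_t^c\mid H_t)\le |\mathcal{X}||\mathcal{P}_{n,\rho}|\,e^{-\beta_t/2}=\sqrt{2\pi}/(t^2+1)$. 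On $\mathcal{E}_t$ the first bracket is at most $\sqrt{\beta_t}\,\sigma_{t-1}(x_t,P_t)$, so the summand is at most $2\sqrt{\beta_t}\,\sigma_{t-1}(x_t,P_t)$; on $\mathcal{E}_t^c$ I would bound the summand crudely using $\sigma_{t-1}\le\sigma_0\le 1$ and $|f(x,w)|\le\|f\|_k\le B$ (which follows from $k\le 1$, hence $|g(f,x,P)|\le B$), and then sum $\sum_t \sqrt{2\pi}/\big((t^2+1)|\mathcal{X}||\mathcal{P}_{n,\rho}|\big)$ against the factor $\sqrt{\beta_T}+B$ (using $\sqrt{\beta_t}\le\sqrt{\beta_T}$). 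This produces the first term $(\sqrt{\beta_T}+B)\sqrt{2\pi}/(|\mathcal{X}||\mathcal{P}_{n,\rho}|)$; a sharper treatment of the $\mathcal{E}_t^c$ contribution via Lemma~\ref{lem:lem2}.2--3, in the spirit of Lemma~\ref{lemma:second_term}, is also available.

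For the good-event contribution I would then pass from the quadrature variance to the variance at the queried point: Lemma~\ref{lem:cov_inequality} with $w^*=w_t=\argmax_{w\in S_n}\sigma_{t-1}^2(x_t,w)$ gives $\sigma_{t-1}(x_t,P_t)\le\sigma_{t-1}(x_t,w_t)$ --- this is exactly why line~\ref{alg:line:w} of Algorithm~\ref{pseudo_code:drbqo} queries the highest-variance $w$. Hence $\mathbb{E}\sum_{t=1}^T 2\sqrt{\beta_t}\,\sigma_{t-1}(x_t,P_t)\le 2\sqrt{\beta_T}\,\mathbb{E}\sum_{t=1}^T\sigma_{t-1}(x_t,w_t)$, and Cauchy--Schwarz plus the standard maximum-information-gain estimate of \cite{DBLP:conf/icml/SrinivasKKS10} (bounding $\sum_{t}\sigma_{t-1}^2(x_t,w_t)$ by a constant multiple of $(1+\sigma^{-2})^{-1}\gamma_T$ via $s\le C\log(1+\sigma^{-2}s)$ on $[0,1]$ and $\sigma_{t-1}^2\le 1$) yields the main term $2\sqrt{T\gamma_T(1+\sigma^{-2})^{-1}\log\frac{(1+T^2)|\mathcal{X}||\mathcal{P}_{n,\rho}|}{\sqrt{2\pi}}}$ after substituting $\beta_T/2=\log\frac{(1+T^2)|\mathcal{X}||\mathcal{P}_{n,\rho}|}{\sqrt{2\pi}}$. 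The remaining additive term $2\gamma_T\sqrt{(1+2\rho)n}\,(1+\sigma^{-2})^{-1}$ I expect to arise as a lower-order correction when $\sigma_{t-1}^2(x_t,P_t)=p^\top C_{t-1}p$ is controlled directly in terms of the scalar posterior variances, with the $\chi^2$-ball constraint $n\|p\|_2^2\le 1+2\rho$ from Proposition~\ref{prop:kkt} entering through $\|p\|_2$.

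The main obstacle is the non-Gaussianity of $g(f,x_t,P_t)$ (and the randomness of $x^*,P^*$) created by the adaptive, sample-dependent choice of $P_t$, which blocks a direct use of the Gaussian concentration underlying GP-UCB; the remedy is the finite-$\mathcal{P}_{n,\rho}$ discretization, the union bound over $\mathcal{X}\times\mathcal{P}_{n,\rho}$, and the good/bad-event partition above, which is essentially the ``connection between posterior sampling and UCB'' specialized to a non-GP objective. A secondary difficulty is cleanly interfacing the $\chi^2$-ball geometry (Proposition~\ref{prop:kkt} and Lemma~\ref{lem:cov_inequality}) with the Srinivas-style information-gain machinery --- this is what produces the somewhat unusual $\sqrt{(1+2\rho)n}$ factor, and if that step is carried out loosely it only inflates the lower-order term, not the sublinear rate.
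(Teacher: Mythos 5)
Your route is genuinely different from the paper's. The paper conditions only on $(H_t,x_t)$, notes that $P_t$ is then still random so $\mathbb{E}[g(f,x_t,P_t)\mid H_t,x_t]$ has no closed form, and handles this by splitting the range of $f(x_t,w)$ into the bands $A_t,B_t,C_t$, comparing $P_t$ to a worst-case $P_t^*$ on $A_t$, and controlling the remainder term by term ($J_1,J_2,J_3$). You instead run the classical GP-UCB/posterior-sampling argument: a uniform confidence event over the finite set $\mathcal{X}\times\mathcal{P}_{n,\rho}$ plus a union bound. That scheme is sound and does give a sublinear bound of the right order, but as written it does not prove the inequality stated in the lemma. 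On your good event the per-round bound is $2\sqrt{\beta_t}\,\sigma_{t-1}(x_t,P_t)\le 2\sqrt{\beta_t}\,\sigma_{t-1}(x_t,w_t)$, which after Cauchy--Schwarz and the information-gain bound yields a leading term $4\sqrt{T\gamma_T(1+\sigma^{-2})^{-1}\log\frac{(1+T^2)|\mathcal{X}||\mathcal{P}_{n,\rho}|}{\sqrt{2\pi}}}$ --- twice the stated coefficient, and since $\gamma_T=o(\sqrt{T})$ for the kernels considered, the extra additive term $2\gamma_T\sqrt{(1+2\rho)n}(1+\sigma^{-2})^{-1}$ in the lemma cannot absorb this doubling for large $T$. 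Likewise your crude bad-event bound, after the union bound, loses the $1/(|\mathcal{X}||\mathcal{P}_{n,\rho}|)$ factor in the first term. In the paper the main term carries only one factor $\sqrt{\beta_t}\,\sigma_{t-1}(x_t,w_t)$ (from $J_1$), and the $e^{-\beta_t/2}$ factors are per-round restricted expectations, not union-bounded probabilities, which is why the constants come out as stated.

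Two further points. First, your sketch never actually derives the term $2\gamma_T\sqrt{(1+2\rho)n}(1+\sigma^{-2})^{-1}$; the closing remark that it should appear when $\sigma_{t-1}^2(x_t,P_t)=p^\top C_{t-1}p$ is controlled via $\|p\|_2$ is a guess and does not match its true origin. In the paper it arises from $J_2$: on the region where $f$ falls below its confidence band one bounds $\sum_w P_t(w)\bigl(\mu_{t-1}(x_t,w)-f(x_t,w)\bigr)$ by Cauchy--Schwarz over $w$, uses the $\chi^2$-ball constraint $n\|P_t\|_2^2\le 1+2\rho$ from Proposition~\ref{prop:kkt}, and then the truncated-Gaussian mean formula (Lemma~\ref{lem:lem2}.3) to convert the conditional deviation into $\sigma_{t-1}^2(x_t,w)$, summed via the information gain. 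Second, your key observation --- that $(x_t,P_t)$ is a function of $\tilde f_t$ and hence conditionally independent of $f$ given $H_t$ --- is correct and, if you actually use it, it short-circuits the whole difficulty: it gives $\mathbb{E}[g(f,x_t,P_t)\mid H_t]=\mathbb{E}[\mu_{t-1}(x_t,P_t)\mid H_t]$, hence $\mathbb{E}\sum_t[U_t(x_t,P_t)-g(f,x_t,P_t)]=\mathbb{E}\sum_t\sqrt{\beta_t}\,\sigma_{t-1}(x_t,P_t)\le\mathbb{E}\sum_t\sqrt{\beta_t}\,\sigma_{t-1}(x_t,w_t)$ by Lemma~\ref{lem:cov_inequality} and line~\ref{alg:line:w} of Algorithm~\ref{pseudo_code:drbqo}, and then Cauchy--Schwarz plus the information-gain inequality already gives the third term of the lemma alone, which implies the stated bound since the other two terms are nonnegative. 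So the ingredient you flag as the crux would carry a complete (and simpler) proof; the good-event/union-bound argument you actually propose proves a weaker-in-constants inequality and leaves the $\sqrt{(1+2\rho)n}$ term unaccounted for.
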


\begin{proof}
Now we bound the first term 
\begin{align*}
    L &:= \mathbb{E} \sum_{t=1}^T \left[ U_t(x_t, P_t) - g(f, x_t, P_t)  \right] \\
    &= \mathbb{E} \sum_{t=1}^T \mathbb{E} [J(x_t, H_t) | x_t, H_t], 
\end{align*}
where
\begin{align*}
    J(x_t, H_t) = \mathbb{E}[  U_t(x_t, P_t) - g(f, x_t, P_t) | H_t, x_t ]. 
\end{align*}

While the second term of the Bayesian regret of DRBQO can be bounded as in Lemma \ref{lemma:second_term} by adopting the techniques from \cite{DBLP:journals/mor/RussoR14}, bounding $L$ in DRBQO is non-trivial. This is because $P_t(.|x)$ is also a random process on the simplex given $H_t$. Thus, $g(f, x_t, P_t) | H_t$ does not follow a GP as in the standard Quadrature formulae. In addition, we do not have a closed form of $\mathbb{E}[ g(f, x_t, P_t) | H_t ]$. We overcome this difficulty by decomposing $J$ into several terms that can be bounded more easily and leveraging the mild assumptions of $f$ in the problem setup. 

Given $(H_t, x_t)$, we are interested in bounding $J(x_t, H_t)$. The main idea for bounding this term is that we decompose the range $\mathbb{R}$ of the random variable $f(x_t,w), \forall w$ into three disjoint sets: 
\begin{align*}
    &A_t(w) =\\
    &\bigg \{ f(x_t, w) \bigg| |f(x_t, w) - \mu_{t-1}(x_t, w)| \leq \sqrt{\beta_t} \sigma_{t-1}(x_t, w) \bigg\}, 
\end{align*}

\begin{align*}
    &B_t(w) =\\
    &\bigg \{ f(x_t, w) \bigg|  \mu_{t-1}(x_t, w) - f(x,w) >  \sqrt{\beta_t} \sigma_{t-1}(x_t, w) \bigg\},  
\end{align*}

\begin{align*}
    &C_t(w) =\\
    &\bigg \{ f(x_t, w) \bigg|  \mu_{t-1}(x_t, w) - f(x,w) <  -\sqrt{\beta_t} \sigma_{t-1}(x_t, w) \bigg\}, 
\end{align*}
for all $w \in \Omega$. Note that $A_t(w) \cup B_t(w) \cup C_t(w) = \mathbb{R},  \forall w$.  We also denote $\bar{A}_t(w) = \mathbb{R} \backslash A_t(w) = B_t(w) \cup C_t(w), \forall w$. 

Since $f$ is bounded on $A_t$, there exists $P^*_t$ such that
\begin{align*}
    P^*_t(.|x) = \argmax_{P \in \mathcal{P}_{n,\rho}} \{ U_t(x,P) - g(f, x,P)| f \in A_t \}, 
\end{align*}
for all $x \in \mathcal{X}$. 

Using the equation above, we decompose $J(x_t, H_t)$ as
\begin{align*}
    &J(x_t, H_t) = \mathbb{E}[  U_t(x_t, P_t) - g(f, x_t, P_t)| H_t, x_t] \\
    &=  \mathbb{E}_{f \in A_t}[  U_t(x_t, P_t) - g(f, x_t, P_t)| H_t, x_t] \\
    &+ \mathbb{E}_{f \in \bar{A}_t}[  U_t(x_t, P_t) - g(f, x_t, P_t)| H_t, x_t] \\
    &\leq \mathbb{E}_{f \in A_t}[  U_t(x_t, P^*_t) - g(f, x_t, P^*_t)| H_t, x_t] \\
    &+ \mathbb{E}_{f \in \bar{A}_t}[  U_t(x_t, P_t) - g(f, x_t, P_t)| H_t, x_t] \\
    &= \mathbb{E}[  U_t(x_t, P^*_t) - g(f, x_t, P^*_t)| H_t, x_t] \\
    &+ \mathbb{E}_{f \in \bar{A}_t}[  U_t(x_t, P_t) - g(f, x_t, P_t)| H_t, x_t] \\ 
    &- \mathbb{E}_{f \in \bar{A}_t}[  U_t(x_t, P^*_t) - g(f, x_t, P^*_t)| H_t, x_t] \\ 
    &= J_1 + J_2 + J_3,
\end{align*}
where 
\begin{align*}
    J_1 &= \mathbb{E}[  U_t(x_t, P^*_t) - g(f, x_t, P^*_t)| H_t, x_t], \\
    J_2 &= \mathbb{E}_{f \in \bar{A}_t}[  U_t(x_t, P_t) - g(f, x_t, P_t)| H_t, x_t], \\ 
    J_3 &= \mathbb{E}_{f \in \bar{A}_t}[  g(f, x_t, P^*_t) - U_t(x_t, P^*_t)| H_t, x_t]. 
\end{align*}

It follows from Lemma \ref{lem:cov_inequality} and from the selection of $w_t$ for the highest posterior variance in the DRBQO algorithm (Algorithm 1 in the main text) that for all $P$, we have
\begin{align*}
    &\sigma^2_{t-1}(x_t, P) = \sum_{w, w'} P(w|x) P(w'|x) C_{t-1}(x_t, w; x_t, w') \\
    &\leq \sigma_{t-1}^2(x_t, w_t). 
\end{align*}
Note that given $(H_t, x_t)$, $w_t$ is deterministic.  

For $J_1$, we have
\begin{align*}
    J_1 &= \mathbb{E}[  U_t(x_t, P^*_t) - g(f, x_t, P^*_t)| H_t, x_t] \\ 
    &= U_t(x_t, P^*_t) - \mathbb{E}[ g(f, x_t, P^*_t)| H_t, x_t] \\ 
    &= U_t(x_t, P^*_t) - \mu_{t-1}(x_t, P^*_t) \\ 
    &= \sqrt{\beta_t} \sigma_{t-1}(x_t, P^*_t) \\ 
    &\leq \sqrt{\beta_t} \sigma_{t-1}(x_t, w_t).
\end{align*}

For $J_2$, we have
\begin{align*}
    J_2 &= \mathbb{E}_{f \in \bar{A}_t}[  U_t(x_t, P_t) - g(f, x_t, P_t)| H_t, x_t] \\ 
    &=  \mathbb{E}_{f \in \bar{A}_t} [ \sqrt{\beta_t} \sigma_{t-1}(x_t, P_t)| H_t, x_t] \\
    &+  \mathbb{E}_{f \in B_t} [ \sum_{w} P_t(w) (\mu_{t-1}(x_t, w) - f(x_t, w) | H_t, x_t] \\ 
    &+  \mathbb{E}_{f \in C_t} [ \sum_{w} P_t(w) (\mu_{t-1}(x_t, w) - f(x_t, w)   | H_t, x_t] \\ 
    &\leq \mathbb{E}_{f \in \bar{A}_t}[ \sqrt{\beta_t} \sigma_{t-1}(x_t, w_t) ]\\
    &+  \mathbb{E}_{f \in B_t} [ \sum_{w} P_t(w) (\mu_{t-1}(x_t, w) - f(x_t, w)    | H_t, x_t] \\ 
    &\leq \sqrt{\beta_t} \sigma_{t-1}(x_t, w_t) e^{-\beta_t/2} \\ 
    &+  \mathbb{E}_{f \in B_t} [ \sum_{w} P_t(w) (\mu_{t-1}(x_t, w) - f(x_t, w) | H_t, x_t] \\
    &\leq \sqrt{\beta_t} \sigma_{t-1}(x_t, w_t) e^{-\beta_t/2} \\
    &+ \mathbb{E}_{f \in B_t} \sqrt{\sum_{w}  (\mu_{t-1}(x_t, w) - f(x_t, w))^2 \sum_{w} P_t^2(w)} \\ 
    &\leq \sqrt{\beta_t} \sigma_{t-1}(x_t, w_t) e^{-\beta_t/2} \\
    &+ \mathbb{E}_{f \in B_t} \sqrt{\sum_{w}  (\mu_{t-1}(x_t, w) - f(x_t, w))^2 \frac{1+2\rho}{n}} \\ 
    &\leq \sqrt{\beta_t} \sigma_{t-1}(x_t, w_t) e^{-\beta_t/2} \\
    &+ \sqrt{\frac{1+2\rho}{n}} \mathbb{E}_{f \in B_t} \sum_{w}  (\mu_{t-1}(x_t, w) - f(x_t, w))  \\ 
    &= \sqrt{\beta_t} \sigma_{t-1}(x_t, w_t) e^{-\beta_t/2} \\
    &+ \sqrt{\frac{1+2\rho}{n}} \sum_{w} (\mu_{t-1}(x_t, w) - \mathbb{E}_{f \in B_t}[f(x_t, w)] )  \\
    &= \sqrt{\beta_t} \sigma_{t-1}(x_t, w_t) e^{-\beta_t/2} \\
    &+ \sqrt{\frac{1+2\rho}{n}} \sum_{w} \sigma^2_{t-1}(a_t,w)  \kappa(x_t, w)  \\
    &\leq \sqrt{\beta_t} \sigma_{t-1}(x_t, w_t) e^{-\beta_t/2} + \sqrt{\frac{1+2\rho}{n}} \sum_{w} \sigma^2_{t-1}(a_t,w) \\ 
    &\leq \sqrt{\beta_t} \sigma_{t-1}(x_t, w_t) e^{-\beta_t/2} + \sqrt{n(1+2\rho)} \sigma^2_{t-1}(a_t,w_t),  \\ 
\end{align*}
where 
\begin{align*}
    \kappa(x_t, w) := \frac{p(\mu_{t_1}(x_t,w) - \sqrt{\beta_t}  \sigma_{t-1}(x_t, w) )}{\phi(\mu_{t_1}(x_t,w) - \sqrt{\beta_t}  \sigma_{t-1}(x_t, w) )} \leq 1,
\end{align*}
and $p(.)$ and $\phi(.)$ denote the density function and the cumulative distribution function of the Gaussian distribution $\mathcal{N}(\mu_{t-1}(x_t, w), \sigma^2_{t-1}(x_t, w)), \forall w$. Here, the third inequality follows from the Cauchy-Schwartz inequality; the fourth inequality follows from the bound of the $\chi^2$ ball on the distributions in it; the fifth inequality follows from that fact that $\mu_{t-1}(x,w) - f(x,w) \geq \sqrt{\beta_t} \sigma_{t-1}(x,w) \geq 0$; and the final equation follows from Lemma \ref{lem:lem2}.3. 

For $J_3$, we have 
\begin{align*}
    J_3 &= \mathbb{E}_{f \in \bar{A}_t}[  g(f, x_t, P^*_t) - U_t(x_t, P^*_t)| H_t, x_t] \\ 
    &= \mathbb{E}_{f \in \bar{A}_t}[  g(f, x_t, P^*_t) | H_t, x_t] + \mathbb{E}_{f \in \bar{A}_t} [ -U_t(x_t, P^*_t)] \\ 
    & = \mathbb{E}_{f \in \bar{A}_t}[ -U_t(x_t, P^*_t)] \\ 
    &=  \mathbb{E}_{f \in \bar{A}_t} [- \mu_{t-1}(x_t, P^*_t) - \sqrt{\beta_t} \sigma_{t-1}(x_t, P^*_t) ]\\ 
    &\leq \mathbb{E}_{f \in \bar{A}_t} [- \mu_{t-1}(x_t, P^*_t)] \\ 
    & \leq \mathbb{E}_{f \in \bar{A}_t} [B] \\
    & \leq B e^{-\beta_t/2}. 
\end{align*}

Here, the second equation follows from the property that $\mathbb{E}_{f\in\bar{A}_t}[f(x_t,w)] = 0$ since $f(x_t,w) \sim \mathcal{N}(\mu_{t-1}(x_t, w), \sigma^2_{t-1}(x_t,w)), \forall w$, and $\bar{A}_t(w)$ is a symmetric region in $\mathbb{R}$ with respect to (but not including) the line $x = \mu_{t-1}(x_t,w), \forall w$; the first inequality follows the non-negativity of the posterior variance $\sigma_{t-1}(x_t, P^*_t)$; the second inequality follows from that the posterior mean $\mu_{t-1}(x, w)$ of a GP is in the RKHS associated with kernel $k$ of the GP, thus is bounded above by $B$ by the mild assumption in the problem setup; and the final inequality follows from Lemma \ref{lem:lem2}.1.  

Combining these results, we can finally bound the first term of the Bayesian regret of DRBQO, 

\begin{align*}
    L &= \mathbb{E} \sum_{t=1}^T \mathbb{E} [J(x_t, H_t) | x_t, H_t] \\
    &\leq \mathbb{E} \sum_{t=1}^T \sqrt{\beta_t} \sigma_{t-1}(x_t, w_t) + \mathbb{E} \sum_{t=1}^T B e^{-\beta_t /2 } \\
    &+ \mathbb{E} \sum_{t=1}^T \sqrt{\beta_t} \sigma_{t-1}(x_t, w_t) e^{-\beta_t/2} \\
    &+ \mathbb{E} \sum_{t=1}^T \sqrt{n(1+2\rho)} \sigma^2_{t-1}(a_t,w_t) \\
    &\leq \mathbb{E} \sqrt{T \beta_T} \sqrt{\sum_{t=1}^T \sigma^2_{t-1}(x_t, w_t)} 
    \end{align*}
\begin{align*}
    &+ (B+\sqrt{\beta_T})\sum_{t=1}^{\infty} \frac{\sqrt{2\pi}}{(1+t^2) |\mathcal{X}| |\mathcal{P}_{n,\rho}|} \\ 
    &+  \sqrt{n(1+2\rho)} \mathbb{E} \sum_{t=1}^T \sigma^2_{t-1}(a_t,w_t) \\
    &\leq \sqrt{T \beta_T} \sqrt{2(1+\sigma^{-2})^{-1} \gamma_T } + \frac{(\sqrt{\beta_T}+B) \sqrt{2\pi}}{|\mathcal{X}| |\mathcal{P}_{n,\rho}|} \\ 
    &+ \sqrt{n(1 + 2\rho)} 2(1+\sigma^{-2})^{-1} \gamma_T, 
\end{align*}
where $\gamma_T$ is the maximum information gain defined in \cite{DBLP:conf/icml/SrinivasKKS10}, and we also use the following inequality of the maximum information gain 
\begin{align*}
    \sum_{t=1}^T \sigma_{t-1}^2(x_t, w_t) \leq 2(1 + \sigma^{-2})^{-1} \gamma_T.  
\end{align*}
\end{proof}

Theorem 2 is a direct consequence of Lemma \ref{lem:ps_decompose}, Lemma \ref{lemma:second_term} and Lemma \ref{lem:first_term}. 

\textbf{Upper bounds on the information gain}. For completeness, we include here the upper bounds for the information gains $\gamma_T$ which are derived from \cite{DBLP:conf/icml/SrinivasKKS10}:
\begin{center}
\begin{tabular}{ | c | c |} 
\hline
\textbf{Kernel type} & \textbf{Information gain} $\gamma_T$ \\ 
\hline 
Linear & $\mathcal{O}(d \log T)$  \\ 
\hline
Squared exponential & $\mathcal{O}(\log T)^{d+1})$  \\ 
\hline
Mat\'ern with $\nu > 1$ & $ \mathcal{O}( T^{d(d+1) / (2 \nu + d(d+1))} \log T ) $  \\ 
\hline
\end{tabular}
\end{center}
where $d \in \mathbb{N}$ is the dimension of the search domain. 

\section*{Appendix B.1}
In this appendix, we provides derivation details of Proposition 1 in the main text. 

Consider the constrained optimization problem 
\begin{align}
    \label{eq:rho_quadrate_weights}
    \min_{p \in \mathcal{P}_{n,\rho}} \sum_{i=1}^n p_i l_i. 
\end{align}

This is a convex optimization problem which forms the Lagrangian: 
\begin{align}
    L(p, \lambda, \eta, \zeta) &= p^Tl - \lambda \left(\rho - \frac{1}{2n} \sum_{i=0}^n (n p_i - 1)^2 \right)  \nonumber \\
    &-\eta (1 - \sum_{i=1}^n p_i) - \sum_{i=1}^n \zeta_i p_i, 
\end{align}
where $p \in \mathbb{R}^n, \lambda \geq 0$, $\eta \in \mathbb{R}$, and $\zeta \in \mathbb{R}^n_{+}$. The KKT conditions for the primal problem (\ref{eq:rho_quadrate_weights}) are: 
\begin{numcases}{}
\label{kkt:stationary}
l_i + \lambda (np_i - 1) + \eta - \zeta_i = 0, \forall 1 \leq i \leq n \\ 
\label{kkt:slack_1}
\lambda \left(2\rho + 1 - n \| p\|_2^2 \right)  = 0 \\ 
\label{kkt:primal_1}
n \| p\|_2^2 \leq 2\rho + 1  \\ 
\label{kkt:dual_1}
\lambda \geq 0 \\ 
\label{kkt:primal_2}
\sum_{i=1}^n p_i = 1 \\ 
\label{kkt:slack_2}
\zeta_i p_i = 0,  \forall 1 \leq i \leq n \\ 
\label{kkt:dual_2}
\zeta_i \geq 0, \forall 1 \leq i \leq n.
\end{numcases}  
We can see that the strong duality holds because the primal problem in (\ref{eq:rho_quadrate_weights}) satisfies the Slater's condition; therefore the KKT conditions are the necessary and sufficient conditions for the primal optimal solution. It follows from Equations (\ref{kkt:stationary}), (\ref{kkt:slack_2}), and (\ref{kkt:dual_2}) that:
\begin{align}
    \label{eq:p}
    \lambda n p_i = (-l_i -\eta)_{+} := \max \{ -l_i -\eta,0 \},
\end{align}
which, combined with Equation (\ref{kkt:primal_2}), implies that: 
\begin{align}
    \label{eq:eta_to_lam}
    n \lambda = \sum_{i=1}^n (-l_i - \eta)_{+}
\end{align}

From Equation (\ref{eq:eta_to_lam}), we have:
\begin{align}
    \label{eq:lam_to_eta}
    \eta &= \frac{-\sum_{i \in A} l_i - n \lambda}{ |A|},
\end{align}
where $A = \{ i: l_i + \eta \leq 0  \} $. Note that $|A| \geq 1$ because otherwise $p_i =0, \forall 1 \leq i \leq n$ which contradicts Equation (\ref{kkt:primal_2}). We then plug Equation (\ref{eq:lam_to_eta}) and (\ref{eq:p}) into Equation (\ref{kkt:primal_1}) to solve for $\lambda$. Note that $\|p\|_2^2$ is decreasing in $\lambda$, thus we can bisect to find the optimal $\lambda$ within its bound. We can easily obtain a bound on $\lambda$ from Equation (\ref{kkt:primal_1}): 
\begin{align*}
    0 \leq \lambda &\leq \max \left\{  \frac{-l_{min} + \sum_{i=1}^n l_i  }{\sqrt{1 + 2 \rho} - 1}, \frac{-l_{min} + l_{max}}{ \sqrt{1 + 2 \rho} }   \right \}, 
\end{align*}
where $l_{min} = \min_{1 \leq i \leq n} l_i$, and  $l_{max} = \max_{1 \leq i \leq n} l_i$.

The optimal distribution $\argmin_{p \in \mathcal{P}_{n,\rho}} \sum_{i=1}^n p_i l_i $ is not constant, but rather a function of $l$. Thus, its gradients with respect to some parameter $\psi$ must be computed from those of $l$. This becomes straightforward when we have solved $(p_i, \lambda, \eta)$ in terms of $l$ as in the results above:
 \begin{align*}
 \begin{cases}
    \label{eq:dr_gradient}
    \frac{\partial p_i}{\partial \psi}  = \frac{-1}{n \lambda^2} (-l_i - \eta) \frac{\partial \lambda}{\partial \psi}  + \frac{1}{n \lambda}(-\frac{\partial l_i}{\partial \psi} -\frac{\partial \eta}{\partial \psi}) \\
    |A| \frac{\partial \eta}{\partial \psi} =  -\sum_{i \in S} \frac{\partial l_i}{\partial \psi} - n \frac{\partial \lambda}{\partial \psi} \\ 
    \sum_{i \in S} p_i \frac{\partial p_i}{\partial \psi} = 0.      
 \end{cases}
 \end{align*}

\section*{Appendix B.2}
In this appendix, we present the details of the bisection search in Algorithm \ref{alg:bisect} for computing the $\rho$-robust distributions for line 3 of Algorithm 1 in the main text. 

\setcounter{algocf}{1}
\begin{algorithm}
\DontPrintSemicolon
\KwInput{$p(\lambda)$ computed in Proposition 1 in the main text, $\epsilon \geq 0$}
    $\lambda_{min} = 0$ \;
    
    $\lambda_{max} = \max \left\{  \frac{-l_{min} + \sum_{i=1}^n l_i  }{\sqrt{1 + 2 \rho} - 1}, \frac{-l_{min} + l_{max}}{ \sqrt{1 + 2 \rho} }   \right \}$ \; 
    
    $\lambda = \lambda_{min}$ \; 

    \While{$\lambda_{max} - \lambda_{min} > \epsilon$}
    {
        $\lambda = \frac{1}{2}(\lambda_{max} + \lambda_{min})$ ;\ 
        
        \If{$n \| p(\lambda)\|_2^2 > 2\rho + 1$}
        {
            $\lambda_{min} = \lambda$ 
        }
        
        \Else
        {
            $\lambda_{max} = \lambda$
        }
    }
\KwOutput{$\lambda, p(\lambda)$}
\caption{Bisection search}
\label{alg:bisect}
\end{algorithm}

\section*{Appendix B.3}
For the details of derivation for posterior sampling (a.k.a Thompson sampling), see Appendix A of \cite{nipsMuandetFDS12}. 


\section*{Appendix C}
In this appendix, we provide some more experimental results of DRBQO on synthetic and real-world problems. 

\setcounter{figure}{5}
\begin{figure}[H]
    \centering
    \includegraphics[scale=0.62]{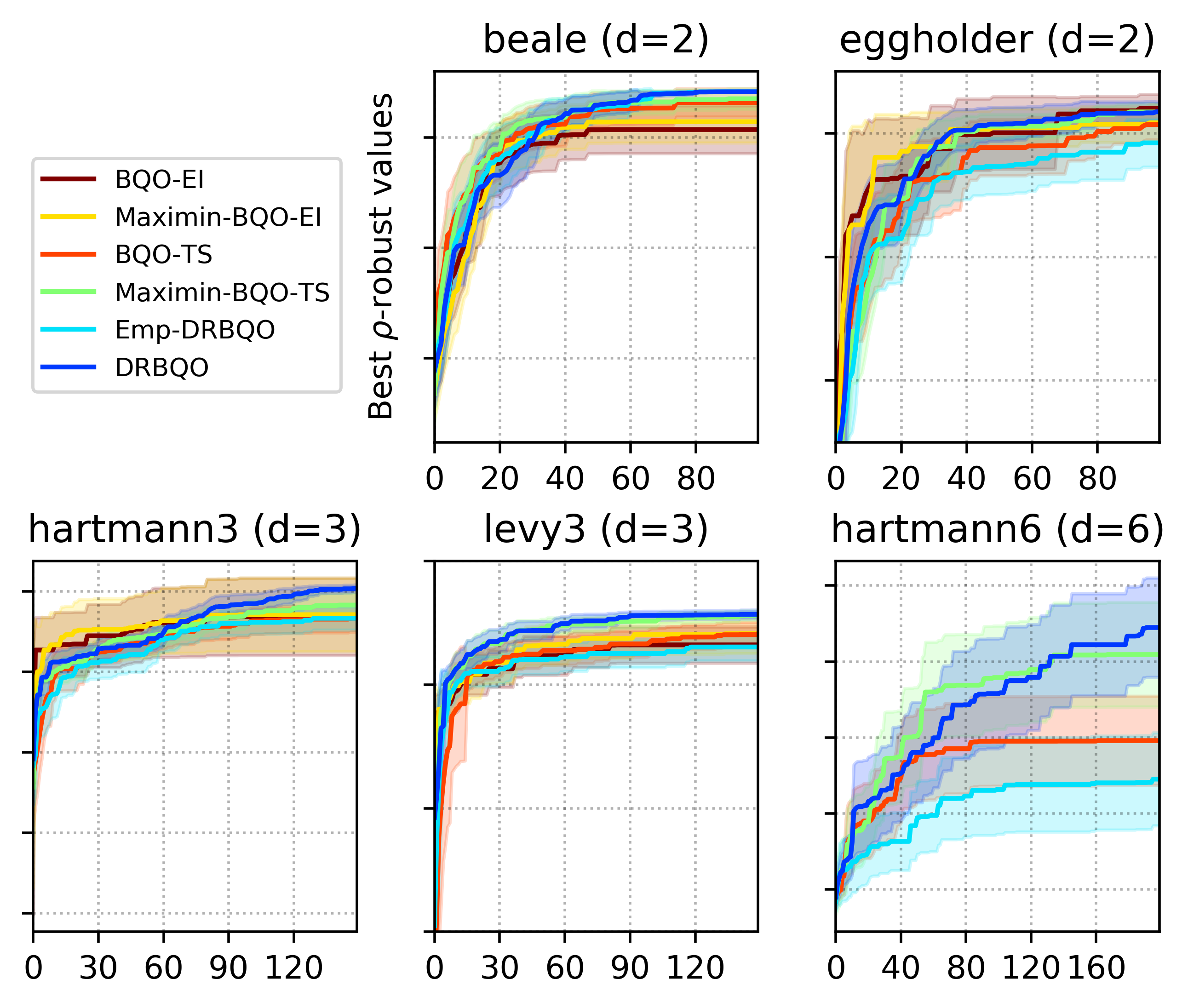}
    \caption{The performance of DRBQO and the baselines on the expected reformulation of various synthetic functions. Here we use $n=10$ and the best $\rho$ values are calculated with $\rho=1.0$. DRBQO achieves higher $\rho$-robust values than the BQO baselines in almost all the tested functions.}
    \label{fig:synthetic_various}
\end{figure}

\textbf{Synthetic functions}. The task in this experiment is to maximize $\mathbb{E}_{w \in \mathcal{N}(0,1)}[f(x,w)]$ where $f$ is a standard synthetic function such as Beale, Eggholder, Hartmann and Levy, $x$ is normalized to the unit cube and $f(x,w) := f(x+w)$. The performance metric used in this experiment is the $\rho$-robust values $\min_{P \in \mathcal{P}_{n,\rho}} \mathbb{E}_{P(w)}[f(x,w)]$. Here we use $n=10$ and $\rho=1.0$. We repeat the experiment $30$ times and report the average mean and the $96\%$ confidence interval for each evaluation
metric. The result is presented in Figure \ref{fig:synthetic_various}. The result shows that DRBQO achieves higher $\rho$-robust values than the baseline methods in all these functions except that in EggHolder function, DRBQO is compatable with BQO-EI but outperforms the other algorithms.

\textbf{Cross-validation hyperparameter tuning for SVM}. 
We use glass and connectionist bench classification datasets from UCI machine learning repository. \footnote{http://archive.ics.uci.edu/ml} The glass dataset contains $214$ samples describing glass properties in $10$ features. The task associated with the glass dataset is to classify an example into one of $7$ classes. The connectionist bench dataset contains $208$ samples each of which has $60$ attributes. The task in the connectionist bench dataset is to classify whether sonar signals bounced off a metal cylinder or a roughly cylindrical rock. Each of the datasets is split into the training and test sets with the ratio of $80:20$. The training set is further split into $n=5$ folds for this experiment. 

\begin{figure}[H]
    \centering
    \includegraphics[scale=0.6]{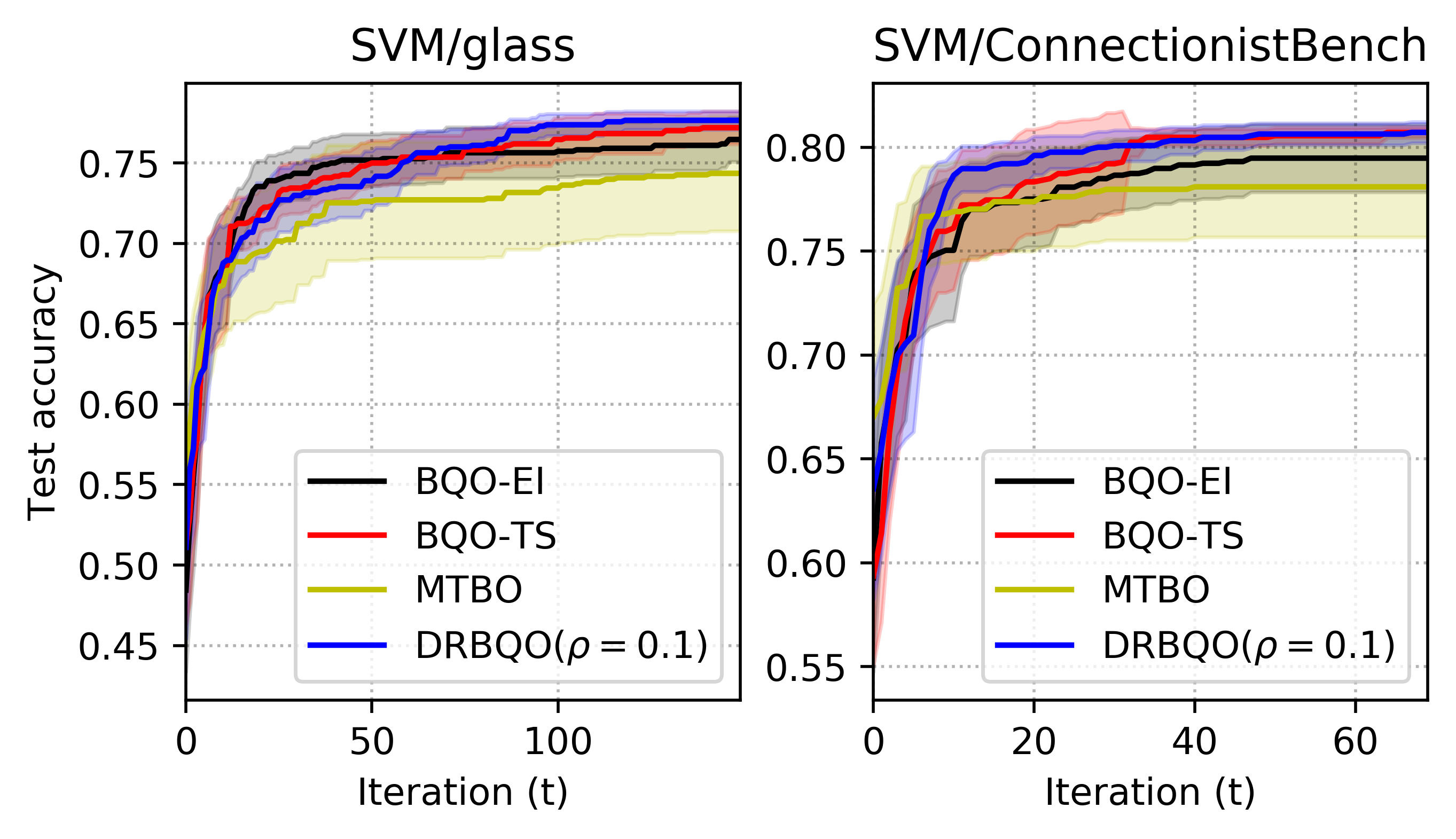}
    \caption{The test classification accuracy of SVM on glass and connectionist bench dataset tuned by DRBQO and the BQO baselines. In this example, we use $n=5$. }
    \label{fig:svm}
\end{figure}

Support vector machine (SVM) is a simple machine learning algorithm for classification problems. SVMs with RBF kernels have two hyperparameters: the misclassification trade-off $C$ and the RBF hyperparameter $\gamma$. We tuned these two hyperparameters in this example.  

The performance metric for this experiment is the classification accuracy of SVM in the test set.  We repeat the experiment $30$ times and report the average mean and the $96\%$ confidence interval for each evaluation
metric. The result is presented in Figure \ref{fig:svm}. In this example, DRBQO outperforms the baselines.

\end{document}